\documentclass[twoside]{article}

\usepackage{hyperref}

% *** GRAPHICS RELATED PACKAGES ***
%
% \ifCLASSINFOpdf
% \else
% \fi

\usepackage[numbers, compress]{natbib}
\usepackage[acronym]{glossaries}

\usepackage{amsfonts}       % blackboard math symbols
\usepackage{amsmath}
\usepackage{amssymb}
\usepackage{amsthm}
\usepackage{booktabs}       % professional-quality tables
\usepackage{epsfig}
\usepackage{graphicx}
\usepackage{makecell}
\usepackage{microtype}      % microtypography
\usepackage{multirow}
\usepackage{nicefrac}       % compact symbols for 1/2, etc.
\usepackage{subcaption}
\usepackage{times}
\usepackage[marginpar]{todo}
\usepackage[binary-units=true]{siunitx}

\sisetup{detect-weight=true}
\DeclareSIUnit{\nothing}{\relax}

\newtheorem{procedure}{Procedure}

\newtheorem{lemma}{Lemma}

\newtheorem{definition}{Definition}
\newtheorem{properties}{Property}
\newtheorem{algorithm}{Algorithm}

\usepackage[flushleft]{threeparttable}

\usepackage{placeins}
\usepackage{lscape}

%%% Local Variables:
%%% mode: latex
%%% TeX-master: "main"
%%% End:

\newacronym{cnn}{CNN}{Convolutional Neural Network}
\newacronym{nas}{NAS}{Neural Architecture Search}
\newacronym{flops}{FLOPs}{floating point operations}
\newacronym{lra}{LRA}{Low-Rank Approximation}
\newacronym{svd}{SVD}{Singular Value Decomposition}
\newacronym{scef}{DeCEF}{Depthwise Convolutional Eigen-Filter}
\newacronym{conv2d}{Conv2D}{2D convolutional}

%%% Local Variables:
%%% mode: latex
%%% TeX-master: t
%%% End:

% If you use natbib package, activate the following three lines:
%\usepackage[round]{natbib}

% If you use BibTeX in apalike style, activate the following line:
%\bibliographystyle{apalike}

  \title{Building Efficient CNNs Using Depthwise Convolutional Eigen-Filters (DeCEF)}

  \author{Yinan Yu \and Samuel Scheidegger \and Tomas McKelvey}
\date{}
  % \address{$^{\star}$Asymptotic AI \And $^{\dagger}$Chalmers University of Technology}
\begin{document}

\maketitle
\begin{abstract}
  Deep \glspl{cnn} have been widely used in various domains due to their impressive capabilities. These models are typically composed of a large number of \glspl{conv2d} layers with numerous trainable parameters. % This comes with a high training cost and potentially leads to poor generalization performance due to the high variance in the results when the training data is not adequate.
To reduce the complexity of a network, compression techniques can be applied. These methods typically rely on the analysis of trained deep learning models. However, in some applications, due to reasons such as particular data or system specifications and licensing restrictions, a pre-trained network may not be available. This would require the user to train a \gls{cnn} from scratch. In this paper, we aim to find an alternative parameterization to \gls{conv2d} filters without relying on a pre-trained convolutional network.
During the analysis, we observe that the effective rank of the vectorized \gls{conv2d} filters decreases with respect to the increasing depth in the network, which then leads to the implementation of the \gls{scef} layer.
Essentially, a \gls{scef} layer is a low rank version of the \gls{conv2d} layer with significantly fewer trainable parameters and \gls{flops}. The way we define the effective rank is different from the previous work and it is easy to implement in any deep learning frameworks.
To evaluate the effectiveness of \gls{scef}, experiments are conducted on the benchmark datasets CIFAR-10 and ImageNet using various network architectures. The results have shown a similar or higher accuracy and robustness using about 2/3 of the original parameters and reducing the number of FLOPs to 2/3 of the base network, which is then compared to the state-of-the-art techniques.

\glsresetall

%%% Local Variables:
%%% mode: latex
%%% TeX-master: "main"
%%% End:

\end{abstract}

\section{Introduction}
Deep \gls{cnn} is one of the most commonly used data-driven techniques. % , where the high complexity potentially results in low bias but high variance in its generalization accuracy \cite{Vapnik1998}
Typically, the large number of trainable parameters in deep learning models result in high demands on the computational power and memory capacities, which requires renting or purchasing expensive infrastructure for training. The heat generated by the GPU servers and the high power consumption during training is not environmentally friendly.
Moreover, the size of the network and the number of \gls{flops} play an important role for the inference process, where a small edge device may be used with restrictions on the complexity of the runtime.
Therefore, building an efficient network is beneficial in terms of saving computational resources and reducing the overall cost for deep learning while achieving similar performances.

One topic on constructing an efficient \gls{cnn} is the \gls{nas}, where the focus is to search for an optimal architecture given certain criteria. In this paper, however, we assume that the wiring of the layers is pre-determined. Our focus is on how to improve the efficiency of a \gls{cnn} for a given architecture.

There are mainly two strategies to achieve this.
The first strategy is to take a pre-trained network and reduce the relatively insignificant parameters. This refers to as \emph{compression} or \emph{pruning} in the literature. This is often a desirable approach since many applications are using pre-trained networks as backbones in their networks.

However, in many scenarios, a pre-trained network is not available due to the particular data or system specifications, restrictive licensing, etc, where training a neural network from scratch is inevitable. In this case, after the overall architecture is established, one may re-parameterize the \gls{cnn} to make it more efficient before training. That is, the network is still aimed to accomplish what the original \gls{cnn} is supposed to achieve but with significantly fewer trainable parameters and \gls{flops}. The approximation is on the functional level instead of relying on trained parameters.
This is the main focus of this work. \footnote{Although not being the main focus of this work, the proposed method can also be applied as a compression technique. This aspect is elaborated in the supplementary material.}

The main hypothesis for finding an efficient re-parameterization strategy is that there is significant redundancy in \gls{conv2d} layers, which means that it may be sufficient to express a \gls{conv2d} layer with fewer parameters in order to achieve similar performances.
One of the most commonly used function approximation techniques is the subspace low rank representation (\citet{Belhumeur1997,Jolliffe1986,Golub1996}). It is a family of very well studied and widely used techniques in the area of signal processing and machine learning.
To put it in the context of \gls{cnn}, the main idea is to rearrange the trainable variables into a vector space and find a subspace spanned by the most significant singular vectors of these variables.
This new representation typically results in fewer trainable variables and runtime \gls{flops} with potentially better robustness.

There are two key steps involved to achieve this approximation:
1) find a {\bf representative vector space} for each layer, and
2) estimate the {\bf effective rank} without training.
To find a representative vector space for \gls{conv2d} filters in a \gls{cnn}, we have designed experiments where we observe that 1) vectorized \gls{conv2d} filters exhibit low rank behaviors, and 2) the effective ranks are different for each layer and they have a decreasing tendency with respect to the depth of the network.
Given these observations, we propose a new convolutional filter \gls{scef}. \gls{scef} is parameterized by a new hyperparameter we call rank, where a full rank \gls{scef} is equivalent to a \gls{conv2d} filter, whereas a rank one \gls{scef} is equivalent to a depthwise separable convolutional layer.
To avoid the common problem of over-tuning, we use a rule-based approach for finding the ranks, where the rules are pre-determined by cross-validation on a small dataset trained on a small network. The rules are then applied to larger datasets and networks without further adjustments or tuning.

The paper is organized as follows. First, to motivate our work, we present the experiments and methodologies being used to observe and analyze the low rank behaviors in several trained \glspl{cnn} in Sec.~\ref{sec:motivation}. We then propose the definition of a new type of filter parameterization \gls{scef} in Sec.~\ref{sec:definition}. To further illustrate the advantages of using a \gls{scef} layer, we show two key properties, robustness and complexity, in Sec.~\ref{sec:properties}. In Sec.~\ref{sec:algorithms}, we present the training strategies for \gls{scef}.
In Sec.~\ref{sec:setup}, we show experiments to evaluate the effectiveness of \gls{scef}. First, we run ablation studies on the smaller dataset CIFAR-10 using \gls{scef} to gain empirical insights of its behaviors in Sec.\ref{sec:exp_cifar10}. To further evaluate the two properties of \gls{scef}, we conduct experiments using the benchmark network ResNet-50 1) on ImageNet for comparing complexity versus accuracy; and 2) on the corruption dataset ImageNet-C to validate the robustness of \gls{scef}.
Moreover, in Sec.~\ref{sec:exp_imagenet}, we run further experiments on two additional popular network architectures DenseNet and HRNet. These results are also compared to other state-of-the-art model reduction techniques in Sec.~\ref{sec:compare}.

% In this work, we have gained interesting insights from these analyses, which are beneficial for further understanding and finding new design principles of \gls{cnn}s.

\section{DeCEF Layers}

\subsection{Motivation}
\label{sec:motivation}
First, let us formally define what a layer is in this context.
\begin{definition}
  \label{def:layer}
  In the scope of this paper, a \gls{conv2d} layer (or a {\bf layer} for short) $$\mathcal{W}=\begin{Bmatrix}\mathbf{w}_j^{(i)}\in\mathbb{R}^{h\times h}: i=1\cdots c_{\text{in}}, j=1\cdots c_{\text{out}}\end{Bmatrix}$$ is a set of trainable units that are characterized by the following attributes: 1) number of input channels $c_{\text{in}}$; 2) number of output channels $c_{\text{out}}$, and 3) parameterization: $\mathbf{w}_{j}^{(i)}\in \mathbb{R}^{ h\times h}$, i.e. the \gls{conv2d} filter.
\end{definition}
Note that there are multiple layers in a network, but we ignore the layer index in this definition for simplicity. When multiple layers appear in the same context, we use $\mathcal{W}_l$ to denote the indexed layer, where the subscript $l\in\{1,\cdots,L\}$ is the layer index and $L$ is the {\bf depth}\footnote{To clarify, this depth refers to the depth of the network. The \emph{depthwise} in \gls{scef} refers to the depth (i.e. input channels) of a layer, which is a different concept.} of the network.
In addition, we denote $K := h^2$. Note that in practice, the filter shape may be rectangular.
Moreover, for the sake of both consistency and convenience, we use $i$ and $j$ to denote the input channel index and the output channel index, respectively.

Our motivation of this work has originated from the low rank behaviors we have observed in the vectorized filter parameters, so let us start with this experimental procedure to illustrate our findings.
\begin{procedure}
  \label{procedure:svd}
  Observing low rank behaviors
  \begin{itemize}
    \item Apply vectorization $\bar{\mathbf{w}}_j^{(i)}:=\operatorname{vec}(\mathbf{w}_j^{(i)}) \in \mathbb{R}^{K}$ and compute the truncated \gls{svd}:
          \begin{equation}
            \label{eqa:trained_sv}
            \bar{\mathbf{U}}^{(i)}\mathbf{S}^{(i)}\mathbf{V}^{(i)\text{T}}=\begin{bmatrix}\bar{\mathbf{w}}^{(i)}_1 &\cdots &\bar{\mathbf{w}}^{(i)}_{c_{\text{out}}}\end{bmatrix},
          \end{equation}
          where matrices $\bar{\mathbf{U}}^{(i)}$ and $\mathbf{V}^{(i)}$ are the left and right singular matrix, respectively; and $\mathbf{S}^{(i)}$ is a diagonal matrix that contains the singular values in a descending order. The implementation of this procedure is well supported by any linear algebra libraries in most programming languages.
    \item Identify the effective rank for each input channel $i$:
          \begin{multline}
            \label{eqa:eff_rank}
            r_i = \mid \{\mathbf{S}^{(i)}[k,k]: ~\mathbf{S}^{(i)}[k,k]\geq \gamma \mathbf{S}^{(i)}[1,1], \\ k=1,\cdots,\min(K, c_{\text{out}}),~\gamma\in[0,1] \}\mid
          \end{multline}
          where $\left|\cdot \right|$ denotes the cardinality of a set and $\mathbf{S}[k,k]$ is the $k^{th}$ diagonal element of matrix $\mathbf{S}$.
    \item The effective rank of one layer $l$:
          \begin{equation}
            \label{eqa:rank_one_layer}
            \hspace{-8mm} \small{r^l = \mid \{s^l: ~s^l\geq \gamma , k=1,\cdots,\min(K, c_{\text{out}}),~\gamma\in[0,1] \}\mid}
          \end{equation}
          where $s^l=\mathbb{E}_i\left(\frac{\mathbf{S}^{(i)}[k,k]}{\mathbf{S}^{(i)}[1,1]}\right)$ and the expected value can be estimated by averaging over all input channels $i$.
  \end{itemize}
\end{procedure}
% \begin{figure*}[h!]
%   \centering
%     \includegraphics[width=1\linewidth]{}
%   \caption{An illustration of how we observe the low rank behaviors in the layers.}
%   \label{fig:sv}
% \end{figure*}
To illustrate the empirical values, examples can be found in Fig.~[1].
and Fig.~[2] in the supplementary material.
To summarize what we have observed:
\begin{itemize}
  \item[1)] the vectorized \gls{conv2d} filters in a trained \gls{cnn} exhibit low rank properties (cf.~Fig.~[1] in supplementary);
  \item [2)] the effective ranks of vectorized filters show a decreasing tendency when the network goes deeper (cf.~Fig.~[2] in supplementary);
  \item[3)] the effective ranks of vectorized filters converge over training steps (see video in supplementary material).
\end{itemize}
Given these observations, we propose a new layer called \gls{scef} as an alternative parameterization to \gls{conv2d} layers for the purpose of reducing the redundancy.
\subsection{Definition}
\label{sec:definition}
In this section, we introduce the definition of \gls{scef} followed by its two properties.
Generally speaking, subspace techniques bring better robustness to the learning system due to their reduced model complexity.
Motivated by these observations and analyses, we define a \gls{scef} layer as follows.
\begin{definition}[\gls{scef} layer]
  \label{def:scef}
  A \gls{scef} layer is defined by $$\Theta=\begin{Bmatrix}{\bf w}_j^{(i)}, {\bf w}_j^{(i)}\in \mathbb{R}^{h\times h}, i=1\cdots c_{\text{in}}, j=1\cdots c_{\text{out}}\end{Bmatrix}$$ with the following parameterization
  \begin{equation}
    \label{eqa:theta}
    {\bf w}_j^{(i)} = \sum_{k=1}^{r}  a_{k,j}^{(i)}{\bf u}^{(i)}_k, ~r\in [1, h^2]
  \end{equation}
  where $a_{k,j}^{(i)}\in\mathbb{R}$ and ${\bf u}_k^{(i)}\in\mathbb{R}^{h\times h}$, which satisfies $${\bf \bar{u}}_l^{(i)T}{\bf \bar{u}}_m^{(i)} = \begin{cases} 1 & \text{ if } l=m   \\
      0 & \text{ otherwise}
    \end{cases}  $$ for ${\bf \bar{u}}_k^{(i)}=\operatorname{vec}({\bf u}_k^{(i)})\in\mathbb{R}^{h^2}$. The parameters ${\bf u}_k^{(i)}\in\mathbb{R}^{h\times h}$ are called the {\bf eigen-filters}.

\end{definition}
Note that for the sake of clarity, we use $\Theta$ to denote the \gls{scef} layer, instead of the generic notation $\mathcal{W}$ in Def.\ref{def:layer}.
%     \begin{remark}[Relation to depthwise separable convolution]
%    The depthwise separable convolution \cite{Chollet2017} is widely used in various network topologies due to its high efficiency. From the definition of \gls{scef}, we can see that \gls{scef} is a special case of depthwise separable convolution, where the intermediate convolutional filters are equivalent to the ``basis'' ${\bf u}^{(i)}_k$ in \gls{scef}.
%    In practice, \gls{scef} can be implemented using the depthwise separable convolution with a special training strategy proposed in this paper. % that was recently made available in most deep learning frameworks.
%   In this work, we focus on replacing \gls{conv2d} layers by \gls{scef}. The study of applying \gls{scef} design to depthwise separable convolutional layers is a future direction.
% \end{remark}

\subsection{Properties}
\label{sec:properties}
In this section, we present two key properties of the \gls{scef} layer. These properties are then empirically evaluated in the experiment section.
% A \gls{scef} layer has the following properties.
\begin{properties} Complexity (one layer)
  \label{properties:complexity}
  \begin{itemize}
    \item Number of trainable parameters ($N$)
          \begin{itemize}
            \item $N$(\gls{conv2d}): $c_{\text{in}}c_{\text{out}}h^2$
            \item $N$(\gls{scef}): : $N_u+N_a$, where
                  \begin{itemize}
                    \item[-] Eigen-filters:  $N_u=c_{\text{in}}h^2r$
                    \item[-] Coefficients: $N_a=c_{\text{in}}c_{\text{out}}r$
                  \end{itemize}
          \end{itemize}
          For $r=h^2$, it is trivial to randomly initialize eigen-filters that span the whole $h^2$ dimensional vector space and hence the eigen-filters do not need to be trainable, i.e. $N_u=0$ and $N_a=c_{\text{in}}c_{\text{out}}h^2$. Therefore, \gls{conv2d} and \gls{scef} are equivalent for $r=h^2$.

          For $r<h^2$, $N$(\gls{scef})$<$$N$(\gls{conv2d}) if \small{$r\leq \left \lfloor{\frac{c_{\text{out}}h^2}{c_{\text{out}}+h^2}}\right \rfloor$}.

            Example. Given $c_{\text{in}}=c_{\text{out}}=128$ and $h=3$, we have $N$(\gls{conv2d})$=147456$. If \small{$r\leq 8<h^2=9$}, then $N(\text{\gls{scef}}) < N(\text{Conv2D})$. For $r= 8$, $N$(\gls{scef})$=140288$ and for $r= 4$, $N$(\gls{scef})$=70144$.
    \item FLOPs ($F$)

          We count the multiply-accumulate operations (macc) and we do not include bias in our calculations. Given the dimension of the input layer $ H \times W\times c_{\text{in}}$, let $t=\left\lfloor{\frac{H}{stride}}\right\rfloor\times\left\lfloor{\frac{W}{stride}}\right\rfloor$,
          \begin{itemize}
            \item $F$(\gls{conv2d}): $th^2c_{\text{in}}c_{\text{out}}$

            \item $F$(\gls{scef}): $tc_{\text{in}}r\left(h^2+c_{\text{out}}\right)$
          \end{itemize}
          Example. Given $H=W=100$, $c_{\text{in}}=128$, $c_{\text{out}}=128$ and $h=3$ with $stride=1$, we have $F$(\gls{conv2d})$=1.47~\text{GFLOPs}$. For $r=8$, $F$(\gls{scef})$=1.40~\text{GFLOPs}$. For $r=4$, $F$(\gls{scef})$=0.70~\text{GFLOPs}$.
  \end{itemize}

\end{properties}

\begin{properties} Robustness
  \begin{lemma}
    \label{lemma:reg}
    Let $\Delta\mathbf{I}_i$ be an additive perturbation matrix and $\mathbf{w}_{j}^{(i)}\in \mathbb{R}^{h\times h}$ be a filter parameterized by Eq.~\eqref{eqa:theta}, which is learned from some training process.
    Let
    \begin{equation}
      \label{eqa:Ubar}
      \bar{\mathbf{U}}^{(i)}=\left[{\bf \bar{u}}_0^{(i)},\cdots, {\bf \bar{u}}_r^{(i)}\right].
    \end{equation}
    If $\bar{\mathbf{U}}^{(i)\text{T}}\bar{\mathbf{U}}^{(i)}=\mathbf{I}$ and $\left|\left|\mathbf{a}^{(i)}_{j}\right|\right|_2\leq \epsilon$, $\forall i, j$,
    \begin{equation}
      \label{eqa:reg}
      \left|\left|\sum_{i}\Delta\mathbf{I}_i*\mathbf{w}_{j}^{(i)}\right|\right|_{\infty}\leq \epsilon hr\sum_{i}\left|\left|\Delta\mathbf{I}_i\right|\right|_2.
    \end{equation}
  \end{lemma}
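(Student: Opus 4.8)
The plan is to push the $\infty$-norm through the linear structure of the \gls{scef} parameterization and then bound a single patch/eigen-filter inner product. First I would fix an output position $(p,q)$ and write the corresponding entry of $\Delta\mathbf{I}_i*\mathbf{w}_j^{(i)}$ as an inner product $\langle \mathbf{p}_{pq}^{(i)}, \bar{\mathbf{w}}_j^{(i)}\rangle$, where $\mathbf{p}_{pq}^{(i)}\in\mathbb{R}^{h^2}$ is the vectorized $h\times h$ patch of $\Delta\mathbf{I}_i$ aligned with the filter and $\bar{\mathbf{w}}_j^{(i)}=\operatorname{vec}(\mathbf{w}_j^{(i)})$. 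Substituting Eq.~\eqref{eqa:theta} turns this entry into $\sum_{k=1}^{r}a_{k,j}^{(i)}\langle \mathbf{p}_{pq}^{(i)},\bar{\mathbf{u}}_k^{(i)}\rangle$, so the whole problem reduces to bounding the scalar $\langle \mathbf{p}_{pq}^{(i)},\bar{\mathbf{u}}_k^{(i)}\rangle$ for a single eigen-filter and then recombining.

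For that single inner product I would apply H\"older's inequality, $|\langle \mathbf{p}_{pq}^{(i)},\bar{\mathbf{u}}_k^{(i)}\rangle|\leq \|\mathbf{p}_{pq}^{(i)}\|_{\infty}\,\|\bar{\mathbf{u}}_k^{(i)}\|_{1}$; the two factors are precisely where the constant $h$ and the dependence on $\Delta\mathbf{I}_i$ enter. The orthonormality hypothesis $\bar{\mathbf{U}}^{(i)\text{T}}\bar{\mathbf{U}}^{(i)}=\mathbf{I}$ forces each $\bar{\mathbf{u}}_k^{(i)}$ to be a unit vector in $\mathbb{R}^{h^2}$, so $\|\bar{\mathbf{u}}_k^{(i)}\|_1\leq \sqrt{h^2}\,\|\bar{\mathbf{u}}_k^{(i)}\|_2=h$, which is the origin of the factor $h$. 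For the patch, every entry of $\mathbf{p}_{pq}^{(i)}$ is (up to padding zeros) an entry of $\Delta\mathbf{I}_i$, and a matrix's largest absolute entry is bounded by its spectral norm, $|A_{pq}|=|e_p^{\text{T}}Ae_q|\leq\|A\|_2$, so $\|\mathbf{p}_{pq}^{(i)}\|_{\infty}\leq\|\Delta\mathbf{I}_i\|_2$. Hence each single-eigen-filter term is at most $h\|\Delta\mathbf{I}_i\|_2$, uniformly in $(p,q)$.

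It then remains to reassemble. Using the triangle inequality over the $r$ eigen-filters together with the coefficient bound $\|\mathbf{a}_j^{(i)}\|_2\leq\epsilon$, which yields $\sum_{k=1}^{r}|a_{k,j}^{(i)}|\leq r\,\|\mathbf{a}_j^{(i)}\|_\infty\leq r\epsilon$ and thus the factor $r$, every entry of $\Delta\mathbf{I}_i*\mathbf{w}_j^{(i)}$ is bounded by $\epsilon h r\,\|\Delta\mathbf{I}_i\|_2$. A final triangle inequality over the $c_{\text{in}}$ input channels, followed by taking the maximum over the output positions $(p,q)$, gives $\left\|\sum_i \Delta\mathbf{I}_i*\mathbf{w}_j^{(i)}\right\|_\infty\leq \epsilon h r\sum_i\|\Delta\mathbf{I}_i\|_2$, which is exactly Eq.~\eqref{eqa:reg}.

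The main obstacle is not any individual estimate but matching the advertised constant $hr$: each H\"older factor and the $\ell_1$--$\ell_2$ comparison can be sharpened, and in fact bounding $\|\bar{\mathbf{w}}_j^{(i)}\|_1\leq h\|\bar{\mathbf{w}}_j^{(i)}\|_2=h\|\mathbf{a}_j^{(i)}\|_2\leq h\epsilon$ in one shot (again using orthonormality, $\|\bar{\mathbf{w}}_j^{(i)}\|_2=\|\mathbf{a}_j^{(i)}\|_2$) gives the strictly tighter bound $\epsilon h$ with no $r$ at all. Since $\epsilon h\leq\epsilon h r$, that route already proves the claim; the delicate part is therefore choosing the looser-but-clean per-eigen-filter pairing that reproduces the stated $\epsilon h r$. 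I would also pin down the norm convention at the outset, since reading $\|\Delta\mathbf{I}_i\|_2$ as the Frobenius norm replaces the ``max entry $\le$ spectral norm'' step by the trivial submatrix inequality and shifts the constant-chasing accordingly.
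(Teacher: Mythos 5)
Your proof is correct, and every step checks out: the entrywise representation of the convolution output as a patch--filter inner product, the H\"older step $|\langle \mathbf{p}_{pq}^{(i)},\bar{\mathbf{u}}_k^{(i)}\rangle|\le\|\mathbf{p}_{pq}^{(i)}\|_\infty\|\bar{\mathbf{u}}_k^{(i)}\|_1\le h\,\|\Delta\mathbf{I}_i\|_2$ (using $\|\bar{\mathbf{u}}_k^{(i)}\|_1\le h\|\bar{\mathbf{u}}_k^{(i)}\|_2=h$ and $\max_{p,q}|A_{pq}|\le\|A\|_2$), the coefficient bound $\sum_k|a_{k,j}^{(i)}|\le r\|\mathbf{a}_j^{(i)}\|_\infty\le r\epsilon$, and the final triangle inequality over input channels reproduce exactly the advertised constant $\epsilon h r\sum_i\|\Delta\mathbf{I}_i\|_2$. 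I cannot set your argument side by side with the authors' own, because the paper relegates the proof to supplementary material that is not part of the provided source; your route is, however, the natural one, and the presence of the factor $r$ in the stated bound strongly suggests the authors also argue per eigen-filter (triangle inequality over $k$ with a per-term H\"older bound), exactly as in your main derivation.

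Your closing observation deserves emphasis, because it is more than constant-chasing: bounding $\|\bar{\mathbf{w}}_j^{(i)}\|_1\le h\|\bar{\mathbf{w}}_j^{(i)}\|_2=h\|\mathbf{a}_j^{(i)}\|_2\le h\epsilon$ in one shot (valid since orthonormal columns give $\|\bar{\mathbf{U}}^{(i)}\mathbf{a}_j^{(i)}\|_2=\|\mathbf{a}_j^{(i)}\|_2$) proves the strictly stronger bound $\epsilon h\sum_i\|\Delta\mathbf{I}_i\|_2$ with no $r$ at all, which implies the lemma because $r\ge 1$. This shows the stated inequality is loose by a factor of $r$, and it partially undercuts the interpretation the paper attaches to the lemma --- that the rank $r$ ``controls a trade-off between the robustness and the representational power'' --- since under the stated hypotheses the worst-case perturbation gain does not actually grow with $r$. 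You are also right to pin down the norm conventions explicitly: the claim requires reading $\|\cdot\|_\infty$ as the maximum absolute entry (an induced max-row-sum norm would introduce a factor of the output width), while $\|\Delta\mathbf{I}_i\|_2$ may be taken as either the spectral or the Frobenius norm, both of which dominate the largest entry, so the proof is insensitive to that choice.
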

\end{properties}

\begin{proof}
  See supplementary material.
\end{proof}
Robustness in this context is indicated by the propagation of the additive perturbation between input and output feature maps.
Lemma \ref{lemma:reg} shows that when (1) $\bar{\mathbf{U}}^{(i)\text{T}}\bar{\mathbf{U}}^{(i)}=\mathbf{I}$, i.e. the vectorized filters are orthonormal, and (2) $\left|\left|\mathbf{a}^{(i)}_{j}\right|\right|_2\leq \epsilon$, i.e. the coefficients are bounded by $\epsilon$,
the effect of the perturbation on the output is bounded by Eq.~\eqref{eqa:reg}.

The rank $r$ of the eigen-filters is a hyperparameter that yields a trade-off between the robustness and the representational power of a \gls{scef} layer. In this work, we use a rule based approach for choosing this hyperparameter.

\subsection{Training algorithms}
\label{sec:algorithms}
In this section, we show how to construct and train a network composed of \gls{scef} layers.
% \subsubsection{Training strategy}
% Let $f$ be a loss function for a given \gls{cnn} architecture. We introduce a mapping: $$g_{\Theta_l}:\mathbb{R}^{M^l\times N^l \times c^l_{\text{out}}}\longrightarrow \mathbb{R}^{M^{l+1}\times N^{l+1} \times c^{l+1}_{\text{out}}}$$ that represents the forward path between layer $l$ and $l+1$. We denote the regular \gls{cnn} layer and \gls{scef} layers using $\mathcal{W}_l$ and $\Theta_l$, respectively.
% Let $\mathcal{L}=\begin{Bmatrix}1,\cdots,L\end{Bmatrix}$ be the set of all the layer indices and $\mathcal{L}_g\subseteq\mathcal{L}$.
% For a given $\mathcal{L}_g$, the optimization problem we address is formulated as follows:
% \begin{equation}
%   \label{eqa:optimization}
% \begin{aligned}
% &\underset{\Theta_l, l\in\mathcal{L}_g, \mathcal{W}_q, q\in\mathcal{L}\setminus\mathcal{L}_g}{\text{minimize:}}
% & & f\left(g_{\Theta_l}, c_{\mathcal{W}_q}\right)+\sum_t\lambda_t\sum_l\Phi_t(\Theta_l),
% \end{aligned}
% \end{equation}
% where $c_{\mathcal{W}_q}$ represents the forward path of a \gls{conv2d} layer (with layer index $q$); the second term in Eq.~\eqref{eqa:optimization} denotes the regularization on the complexity of $\Theta_l$, where each constraint $\Phi_t$ is weighted by a multiplier $\lambda_t\in(0,1)$.

% In this paper, we propose a subspace model for $\Theta_l$ (cf.~Def.~\ref{def:scef}) and two regularization terms $\Phi_t$ ($t=1,2$) on the complexity of $\Theta_l$. The subspace model is characterized by its basis vectors (i.e. the eigen-filters) and the corresponding coefficients, which are trained simultaneously using backpropagation.
\subsubsection{The optimization problem}
Given a network architecture with a set of layers $\mathcal{N}=\begin{Bmatrix}\mathcal{W}_1&\cdots&\mathcal{W}_L\end{Bmatrix}$.
Denote the index set of the network $\mathcal{N}$ using $\mathcal{I}_{\mathcal{N}}=\{1,\cdots, L\}$.
Let $\mathcal{G}=\begin{Bmatrix}\Theta_{m_1}&\cdots&\Theta_{m_S}\end{Bmatrix}\subseteq \mathcal{N}$ be a set of \gls{scef} layers with index set $\mathcal{I}_{\mathcal{G}}=\{m_1,\cdots, m_S\}$. Let $\tilde{\mathcal{G}}=\mathcal{N}\setminus\mathcal{G}$ be the rest of the layers in the network. Let $f(\mathcal{N})$ be an objective function and $\lambda\Phi(\tilde{\mathcal{G}})$ be a regularization term applied to the set $\tilde{\mathcal{G}}$, where $\lambda>0$ is the multiplier. The optimization problem is formulated as:
\begin{eqnarray}
  \nonumber
  \min &~&f(\mathcal{N} )+\lambda\Phi(\tilde{\mathcal{G}})\\
  \mathrm {subject~to} &~& \bar{\mathbf{U}_l}^{(i)\text{T}}\bar{\mathbf{U}_l}^{(i)}=\mathbf{I}\\
  \nonumber
  &~&\left|\left|\mathbf{a}^{(i)}_{l, j}\right|\right|_2\leq \epsilon,~
  \forall l\in\mathcal{I}_{\mathcal{G}}
\end{eqnarray}

\subsubsection{Relaxed regularization}
Finding an exact optimal \gls{scef} layer is an NP-hard problem due to the orthonormality constraint. Therefore, we approximate the constraint by the following regularizations.
For a given \gls{scef} layer $l$, we have:
\begin{eqnarray}
  \label{eqa:reg1}
  \Phi_1: &&
  \lambda_1\left|\left|\bar{\mathbf{U}}^{(i)\text{T}}\bar{\mathbf{U}}^{(i)}-\mathbf{I}\right|\right|_2\\
  \label{eqa:reg2}
  \Phi_2: &&
  \lambda_2\left|\left|\mathbf{a}^{(i)}_{j}\right|\right|_2,~~ \mathbf{a}^{(i)}_{j}=\begin{bmatrix}a_{1,j}^{(i)},\cdots,a_{r,j}^{(i)}\end{bmatrix}
\end{eqnarray}
Note that the layer index $l$ is neglected.

The loss function of the whole network is then written as:
\begin{equation}
  \label{eqa:optimization2}
  f(\mathcal{N} )+\lambda\Phi(\tilde{\mathcal{G}})+\lambda_1\Phi_1(\mathcal{G})+\lambda_2\Phi_2(\mathcal{G})
\end{equation}
where $\Phi_i(\mathcal{G})=\sum_{l\in\mathcal{I}_{\mathcal{G}}}\Phi_i^l$, $i=1,2$.
% \item{Loss function:}
% Given a layer $\Theta_l$ parameterized by Eq.~\eqref{eqa:theta} and the regularizations $\Phi_t$ ($t=1,2$), we specify the objective function of the network as follows:
% \begin{multline}
%   \label{eqa:optimization2}
% \underset{\mathbf{a}_{j}^{(i),l}, \bar{\mathbf{U}}^{(i),l}, l\in\mathcal{L}_g, \mathcal{W}_q, q\in\mathcal{L}\setminus\mathcal{L}_g}{\text{minimize:}}
%  f\left(g_{\{\mathbf{a}_{j}^{(i),l}, \bar{\mathbf{U}}^{(i),l}\}}, c_{\mathcal{W}_q}\right) \\+\lambda_1\sum_{i,l}\left|\left|\bar{\mathbf{U}}^{(i),l\text{T}}\bar{\mathbf{U}}^{(i),l}-\mathbf{I}\right|\right|_2
%  +\lambda_2\sum_{i,j,l}\left|\left|\mathbf{a}^{(i),l}_{j}\right|\right|_2
% \end{multline}
% % \begin{multline}
% %   \label{eqa:optimization2}
% % \underset{\mathbf{a}_{j}^{(i),l}, \bar{\mathbf{U}}^{(i),l}, l\in\mathcal{L}_g, \mathcal{W}_q, q\in\mathcal{L}\setminus\mathcal{L}_g}{\text{minimize:}}
% %  f\left(g_{\{\mathbf{a}_{j}^{(i),l}, \bar{\mathbf{U}}^{(i),l}\}}, c_{\mathcal{W}_q}\right) \\+\lambda_1\sum_{i,l}\left|\left|\bar{\mathbf{U}}^{(i),l\text{T}}\bar{\mathbf{U}}^{(i),l}-\mathbf{I}\right|\right|_2
% %  +\lambda_2\sum_{i,j,l}\left|\left|\mathbf{a}^{(i),l}_{j}\right|\right|_2
% % \end{multline}
\subsubsection{Deterministic rule-based hyperparameters}
Hyperparameters are chosen based on deterministic rules to avoid complex hyperparameter tuning and to increase reproducibility. These rules are determined using a transfer learning approach. First, we find the hyperparameters in \gls{scef} using cross-validation on a small dataset CIFAR-10, where cross-validation is affordable. Then we establish a deterministic rule for each hyperparameter. These rules are then directly applied to the larger dataset ImageNet without tuning.
There are three sets of hyperparameters h1 $\sim$ h3:
\begin{itemize}
  \item[{\bf h1:}] Ranks $r$ (Algo.~\ref{alg:usecase1}): The observation of singular values from several networks shows that the effective ranks typically have a decreasing trend with respect to the depth, i.e., layers at the beginning of the network often have higher rank, and vice versa. The idea of choosing the rank before training a network is to find a monotonically decreasing function given the increasing depth.
        In this paper, we adopt two alternative routines for choosing the rank in each layer: linear decay (simple) and logarithmic decay (aggressive). % shown in Fig.~\ref{fig:ranks}.
        % Supplementary
        % \begin{figure}[h!]
        %   \centering
        %     \includegraphics[width=1\linewidth]{}
        %   \caption{Choosing rank using linear decay and logarithmic decay. }
        %   \label{fig:ranks}
        % \end{figure}
        % End of supplementary
        Let $l$ be the depth index of a layer and $K=h^2$. Denote $l_{\text{max}}=\max(l)$ and $l_{\text{min}}=\min(l)$.
        \begin{itemize}
          \item Linear decay: $\hat{r}_i =\left \lfloor{K-\frac{l(K-1)}{l_{\text{max}}-l_{\text{min}}}}\right \rfloor $.
          \item Logarithmic decay: $\hat{r}_i = \left \lfloor{\frac{K-1}{log2(l+1)}}\right \rfloor $.
        \end{itemize}
  \item[{\bf h2:}] Regularization coefficients (Algo.\ref{alg:usecase1}, cf.~Eq.~\eqref{eqa:reg1}, \eqref{eqa:reg2}): $\lambda_1=0.0001r$ and $\lambda_2=0.0001$.
  \item[{\bf h3:}] Singular value threshold to determine the effective rank (cf.~Eq.~\eqref{eqa:eff_rank}, \eqref{eqa:rank_one_layer}): $\gamma=0.3$.
\end{itemize}
The training algorithm is summarized in Alg.~\ref{alg:usecase1}.
\begin{algorithm}(\gls{scef} training strategy)
  \label{alg:usecase1}
  \begin{itemize}
    \item {\bf Step 1:} Choose a network topology fully or partially composed of \gls{scef} layers. For example, one can replace all \gls{conv2d} layers with \gls{scef} layers.
    \item {\bf Step 2:} Choose hyperparameters $r, \lambda_1, \lambda_2$.
    \item {\bf Step 3:} Initialization for each \gls{scef} layer ($k=1, \cdots, r$):
          \begin{itemize}
            \item Eigen-filters $\mathbf{u}_{k}^{(i)}$:
                  \begin{itemize}
                    \item[-] Generate random matrices: $\mathbf{A}^{(i)}\in\mathbb{R}^{K\times r}$.
                    \item[-] Compute the truncated SVD: $\mathbf{A}^{(i)} = \bar{\mathbf{U}}^{(i)}\bar{\mathbf{S}}^{(i)}\bar{\mathbf{V}}^{(i)\text{T}}$.
                    \item[-] Reshape each column in $\bar{\mathbf{U}}^{(i)}$ into matrix $\mathbf{u}_{k}^{(i)}\in\mathbb{R}^{K}$.
                  \end{itemize}
            \item Coefficients $a_{k,j}^{(i)}$: randomly initialized from a normal distribution.
          \end{itemize}
    \item[] {\bf Step 4:} Forward and backward paths:
          \begin{itemize}
            \item Forward ${\bf I}_l\rightarrow {\bf I}_{l+1}$: for each out channel $j$,
                  $${\bf I}_{l+1}^j= \sum_{i=1}^{c_{\text{in}}}\sum_{k=1}^{r}  a_{k,j}^{(i),l}{\bf u}^{(i),l}_k*{\bf I}_{l}^i$$
            \item Backward: backpropagation with the loss function described in Eq.~\eqref{eqa:optimization2}.
          \end{itemize}
  \end{itemize}
\end{algorithm}

% TODO: maybe move to experiment
% In the experiment section, we show the effectiveness of the \gls{scef} network by replacing \gls{conv2d} layers with \gls{scef} for a given network topology and train the network with the training strategy presented in this section. that the resulting network has a largely reduced number of trainable parameters and FLOPs compared to its \gls{conv2d} counterpart.
% For clarity, we use a {\bf base network} (e.g. ResNet-50 or DenseNet-121) to indicate which topology we are using. Given a base network topology, we denote the new network {\bf \gls{scef}-basenet}.

\subsection{Refactor a \gls{conv2d} network into \gls{scef}}
There are such use cases where a pre-trained \gls{cnn} is available and one needs to reduce the runtime complexity of the network. This is not the focus of this work but we also propose a compression algorithm presented in the supplementary material.

\section{Related Work}
% It has been found that many deep neural networks suffer from heavy over-parametrization~\cite{Denil2013}, which results in inefficient computational power consumption and memory utilization. Although recent study shows that over-parametrization can be beneficial for deep learning, it is still beneficial to find reduce redundant parameters.
To compare to the state-of-the-art techniques, in this section, we list the following existing approaches. More related work can be found in the supplementary material.

    {\bf Subspace techniques:}
The first category is the \gls{lra} technique.
There are mainly two different approaches in the existing literature:
1) \emph{Separable bases:}
\citet{Jaderberg2014} decomposes the $d\times d$ filters into $1\times d$ and $d\times 1$ filters to construct rank-1 bases in the spatial domain.
In later work \cite{Tai2015}\cite{lin2018holistic}, closed form solutions that significantly improves the efficiency over previous iterative optimization solvers are proposed.
\citet{Ioannou2015} introduces a novel weight initialization that allows small basis filters to be trained from scratch, which has achieved similar or higher accuracy than the conventional \glspl{cnn}.
\citet{Yu2017} proposes a SVD-free algorithm that uses the idea that filters usually share smooth components in a low-rank subspace.
\citet{alvarez2017compression} introduces a regularizer that encourages the weights of the layers to have low rank during the training.
2) \emph{Filter vectorization:}
Some existing work implements the low rank approximation by vectorizing the filters. For instance, \citet{Denton2014} stacks all filters for each output channel into a high dimensional vector space and approximates the trained filters using \gls{svd}.
\citet{Wen2017} presents a regularization to enforce filters to coordinate into lower-rank space, where the subspaces are constructed from all the input channels for each given output channel.
Recently, \citet{Peng2018} proposed a decomposition focusing on exploiting the filter group structure for each layer.

    {\bf Pruning:}
Pruning refers to techniques that aim at reducing the number of parameters in a pre-trained network by identifying and removing redundant weights. This is a very invested topic in the attempt to reduce the model complexity. Although being different from our use case, we list the state-of-the-art pruning techniques in this section to have a more complete view on model reduction techniques.
In Optimal Brain Damage by \citet{LeCun1990}, and later in Optimal Brain Surgeon by \citet{Hassibi1993}, redundant weights are defined by their impact on the objective function, which are identified using the Hessian of the loss function.
Other definitions of redundancy have been proposed in subsequent work.
For instance, \citet{Anwar2017} applies pruning on the filter-level of \glspl{cnn} by using particle filters to propose pruning candidates.
\citet{Han2015b} introduces a simpler pruning method using a strong L2 regularization term, where weighs under a certain threshold are removed.
\citet{Molchanov2016} uses Taylor expansion to approximate the influence in the loss function by removing each filter.
\citet{hu2016network} iteratively optimizes the network by pruning unimportant neurons based on analysis of their outputs on a large dataset.
\citet{Li2016} identifies and removes filters having a small effect on the accuracy.
\citet{aghasi2017net} prunes a trained network layer-wise by solving a convex optimization program.
\citet{liu2017learning} takes wide and large networks as input models, but during training insignificant channels are automatically identified and pruned afterwards.
More recently, in~\cite{Luo2017b} and \cite{Luo2017}, \citeauthor{Luo2017} analyzes the redundancy of filters in a trained network by looking at statistics computed from its next layer.
\citet{He2017} proposes an iterative LASSO regression based channel selection algorithm.
\citet{Huang2018} removes filters by training a pruning agent to make decisions for a given reward function.
In~\cite{Yu2018}, \citeauthor{Yu2018} poses the pruning problem as a binary integer optimization and derives a closed-form solution based on final response importance.
\citet{lin2018accelerating} prunes filters across all layers by proposing a global discriminative function based on prior knowledge of each filter.
\citet{tung2018clip} combines network pruning and weight quantization in a single learning framework that performs pruning and quantization jointly.
\citet{zhang2018systematic} first formulate the weight pruning problem a a nonconvex optimization problem constraints specifying the sparsity requirements and optimize using the using the alternating direction method of multipliers.
Other work \cite{zhuang2018discrimination} uses discrimination-aware losses into the network to increase the discriminative power of intermediate layers.
\cite{huang2018data} adds a scaling factor to the outputs and then add sparsity regularizations on these factors.
\citet{he2019filter} compresses CNN models by pruning filters with redundancy, rather than those with "relatively less" importance.
\citet{lin2019toward} propos a scheme that incorporates two different regularizers which fully coordinates the global output and local pruning operations to adaptively prune filters.
Later, \citet{lin2019towards}, proposed an effective structured pruning approach that jointly prunes filters as well as other structures in an end-to-end manner by defining a new objective function with sparsity regularization which is solved by generative adversarial learning.
\citet{ding2019centripetal} proposes a novel optimization method, which can train several filters to collapse into a single point in the parameter hyperspace which can be trimmed with no performance loss.
\citet{liu2019metapruning} proposes a meta network, which is able to generate weight parameters for any pruned structure given the target network, which can be used to search for good-performing pruned networks.
\cite{you2019gate} introduce gate decorators to identify unimportant filters to prune.
\cite{Molchanov_2019_CVPR} prunes filters by using Taylor expansions to approximate a filter’s contribution.
\cite{ding19a} finds the least important filters to prune by a binary search.
which keeps searching for the least important filters in a binary search manner
\citet{luo2020autopruner} proposes an efficient channel selection layer to find less important filters automatically in a joint training manner.
\citet{lin2020hrank} proposes a method that is mathematically formulated to prune filters with low-rank feature maps.
\cite{He_2020_CVPR} introduces a differentiable pruning criteria sampler.
\cite{ding2020lossless} proposes a re-parameterization of CNNs to a remembering part and a forgetting part. The former learns to maintain the performance and the latter learns for efficiency.
\cite{liu21ab} proposes a layer grouping algorithm to find coupled channels automatically.
\cite{shi2021} uses an effective estimation of each filter, i.e., saliency, to measured filters from two aspects: the importance for prediction performance and the consumed computational resources. This can be used to preserve the prediction performance while zeroing out more computation-heavy filters.

    {\bf Architectural design:}
Effort has been put into designing a smaller network architecture without loss of the generalization ability. For instance, \citet{He2016} achieves a higher accuracy in \cite{He2016,He2016b}  compared to other more complex networks by introducing the residual building block. The residual building blocks adds an identity mapping that allows the signals to be directly propagated between the layers.
\citet{Iandola2016} introduces SqueezeNet and the Fire module, which is designed to reduce the number of parameters in a network by introducing $1\times1$ filters.
By utilizing dense connections pattern between blocks, \citet{huang2016densely} manages to reduce the number of required parameters.
\citet{Xie2017} proposed a multi-branch architecture which exposes a new hyperparameter for each block to control the capacity of the network.
Other work, like MobileNet\cite{howard2017mobilenets,Sandler2018} and EfficientNet\cite{tan2019efficientnet} specifically focus on builing architectures suitable for devices with low compute capacity, such as mobile phones.
By a design that maintains a high-resolution representation throughout the whole network, \citet{wang2020deep} achieves good accuracy and performance in HRNet.

    {\bf Compression:}
Deep Compression, by \citet{Han2015a}, reduces the storage size of the model using quantization and Huffman encoding to compress the weights in the network. Other work on reducing the memory size of models is done by binarization. In XNOR-Net by \citet{Rastegari2016}, the weights are reduced to a binary representation and convolutions are replaced by XNOR operations.
More recently, \citet{Suau2018} proposed to analyze filter responses to automatically select compression methods for each layer.

    {\bf Weight sharing:}
Another approach to reduce the number of parameters in a network is to share weights between the filters and layers. \citet{Boulch2018} share weights between the layers in a residual network operating on the same scale.

    {\bf Depthwise separable convolutions:}
introduced by \citet{Chollet2017}, have shown to be a more efficient use of parameters compared to regular \glspl{conv2d} Inception like architectures. Depthwise separable convolutions have also been used in other work, e.g., in~\cite{He2017} by \citeauthor{He2017}, where it was used to gain a computational speed-up of ResNet networks.

    {\bf Our focus:} We observe and analyze the \gls{conv2d} from a different perspective compared to the previous subspace techniques. More specifically, i) we vectorize the filters instead of using separable basis in the original vector space (\citet{Jaderberg2014}, \citet{Tai2015}, \citet{Ioannou2015}, \citet{Yu2017}); ii) we do not concatenate these vectorized filters into a large vector space (\citet{Denton2014}, \citet{Wen2017}, \citet{Peng2018}), which achieves a better modularity compared to the concatenated vectors. Our perspective is motivated by the empirical evidence from our experiments. This opens up new opportunities and provides new analytical tools for understanding the design of convolutional networks with respect to their subspace redundancies.
In our experiments, we choose a popular base network (ResNet) and compare our experimental results to various modifications of the same base network.
We also conduct tests on other more recent network architectures such as HRNet-W18-C and DenseNet-121 for further comparison and validation.

%%% Local Variables:
%%% mode: latex
%%% TeX-master: "main"
%%% End

\section{Experiments and Results}
\label{sec:setup}
\subsection{Hardware}
For training and experiments, Nvidia Tesla V100 SXM2 with \SI{32}{\giga\byte} of GPU memory are used.
\subsection{Dataset CIFAR-10: Ablation Study}
\label{sec:exp_cifar10}
 {\bf Dataset:} To empirically study the behavior of \gls{scef}, we conduct various experiments on the standard image recognition dataset CIFAR-10 by~\citet{Krizhevsky2009}.
 {\bf Benchmark:} We use ResNet-32 as the base net for comparison.
        ResNet-32 has three {\bf blocks}, where the last block (block-3) in ResNet-32 has the most filters.
        Since our goal is to reduce the amount of trainable parameters and FLOPs, we mainly vary the structure in block-3 in our experiments.\\
  {\bf Experiments:}
  We design four experiments as follows.
  \paragraph{Experiment 1.~Varying rank $r$ and $c_{\text{out}}$}
                For a layer with input channels $i=1,\cdots,c_{\text{in}}$ and output channels $j=1,\cdots,c_{\text{out}}$, the filters in the \gls{scef} layer is expressed as $\mathbf{w}_j^{(i)}=\sum_{k=1}^{r}a_{k,j}^{(i)}\mathbf{u}_{k}^{(i)}$. We empirically show that \gls{scef} layers achieve higher accuracy with significantly lower number of parameters.
                In this experiment, we vary two hyperparameters: 1) the rank $r$ of each filter in the \gls{scef} layer, and 2) the number of output channels $c_{\text{out}}$.
                We compare the accuracy versus the number of parameters in different types of layers (\gls{conv2d} and \gls{scef} with different hyperparameters). As shown in Fig.~\ref{fig:param_cmp_plot}, with a lower number of parameters, \gls{scef} achieves a better accuracy with low rank techniques.
                Moreover, when we increase the number of output channels, \gls{scef} shows a even more promising result with fewer parameters in total.
\begin{figure}[ht!]
    \centering
    \includegraphics[width=0.9\linewidth]{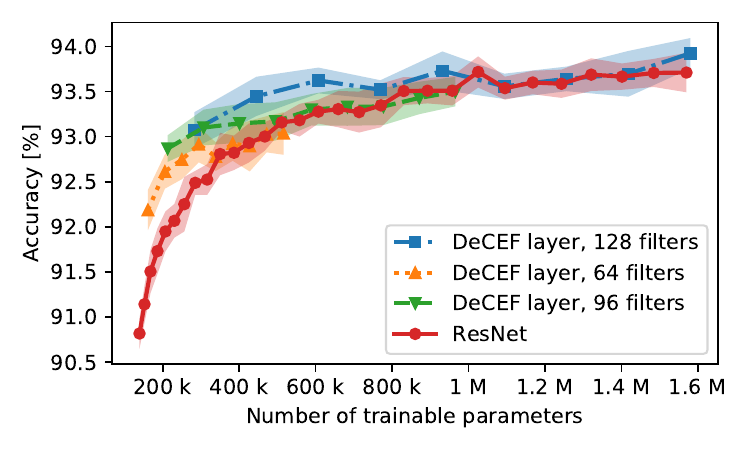}
    \caption{Accuracy versus number of parameters on CIFAR-10.}
    \label{fig:param_cmp_plot}
  \end{figure}
          \paragraph{Experiment 2.~Trainable vs frozen eigen-filters}
                In Algo.~\ref{alg:usecase1}, the eigen-filters in \gls{scef} layers are trained simultaneously using backpropagation. In this experiment, we investigate the impact of this training process and try to understand if it is sufficient to use \emph{random basis vectors} as eigen-filters.
                We initialize the eigen-filters according to Algo.~\ref{alg:usecase1} and freeze them during training.
                The comparison between the accuracies achieved by frozen and trainable eigen-filters can be found in Fig.~\ref{fig:reg_frozen_plot}.
                By using frozen eigen-filters, the network has a fewer number of trainable parameters for the same rank. With a low rank ($r<5$), the accuracy is degraded without training.
  \begin{figure}[ht!]
    \centering
    \includegraphics[width=0.9\linewidth]{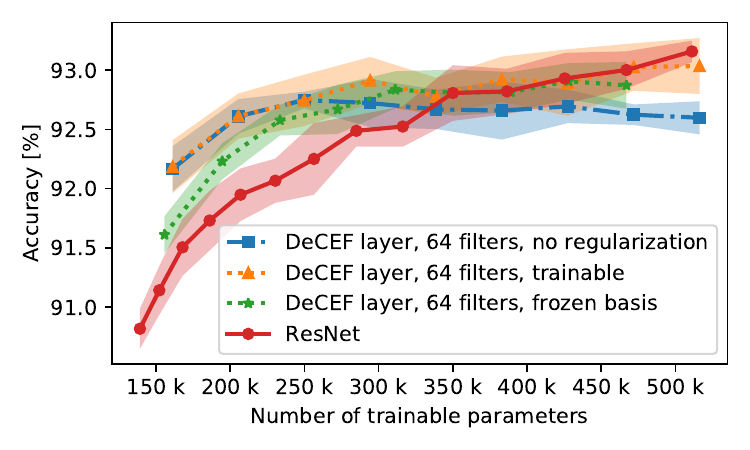}
    \caption{\gls{scef} layer with trainable vs frozen bases on CIFAR-10.}
    \label{fig:reg_frozen_plot}
  \end{figure}
          \paragraph{Experiment 3.~With or without $\Phi_1$ regularization}
                To study the effect of $\Phi_1$ introduced in Eq.~\eqref{eqa:reg1}, some experiments can be found in Fig.~\ref{fig:reg_frozen_plot}.
                We can see that with a high rank, the regularization needs to be applied.
                In our experiment, we use $\lambda_1=0.0001r$ and $\lambda_2=0.0001$, where $\lambda_1$ is the multiplier of the constraint on the eigen-filters and $\lambda_2$ is on the subspace coefficients. The reason for having the multiplier $r$ in  $\lambda_1$ is to suppress the growth of the cost when $r$ becomes large.
          \paragraph{Experiment 4.~Comparison to related work}
          In this experiment, we implement Algo.~\ref{alg:usecase1} (\gls{scef}-ResNet-32) to compared to the state-of-the-art techniques. We vary the number of output channels $c_{\text{out}}$ in the last ResNet block for comparison, where we see that having fewer eigen-filters with more output channels yields a better result.

        {\bf Results:} The results are presented in terms of the estimated mean and the standard deviation of the classification accuracy on the testing set with 10 runs for each experimental setup, which are shown in Fig.~\ref{fig:param_cmp_plot} and Fig.~\ref{fig:reg_frozen_plot}.
        The accuracy is then presented with respect to the number of trainable parameters for each network structure.
        For \gls{scef} layers, there are nine data points in each presented result, which correspond to different layer ranks in block-3 $r_3\in \{1,\cdots, 9\}$. In addition, the number of trainable parameters in \gls{scef} layers is also varied by using different numbers of output channels in block-3, i.e., $c_{\text{out}}\in\{64,96,128\}$. We then vary $c_{\text{out}}$ in ResNet-32 block-3 ($c_{\text{out}}\in\{16,20,24,...,128\}$)  to have a comparable result.
        We compare the accuracy achieved by \gls{scef}-ResNet-32 Fig.~\ref{fig:cifar_comparison}. Additional results can be found in the supplementary material.
\begin{table}[h!]
  \begin{center}
    \resizebox{\columnwidth}{!}{\begin{tabular}{lrrr}\toprule
Network & Acc. & No. param. & MFLOPs\\
\midrule\multicolumn{4}{l}{{\bf (a) \Glsentryshort{scef} vs baseline network}} \\\hline
DeCEF-ResNet-32 (32, 64, 128)\textsuperscript{0}  & {\SI{94.19}{\percent}} & {\SI{533.00}{\kilo\nothing}} & \SI{108.00}{\nothing} \\
DeCEF-ResNet-32 (24, 48, 96)\textsuperscript{1}  & {\SI{93.64}{\percent}} & {\SI{311.00}{\kilo\nothing}} & \SI{64.72}{\nothing} \\
ResNet-56\textsuperscript{2} \cite{He2016} & {\SI{93.03}{\percent}} & {\SI{850.00}{\kilo\nothing}} & \SI{125.49}{\nothing} \\
ResNet-32\textsuperscript{3} \cite{He2016} & {\SI{92.49}{\percent}} & {\SI{467.00}{\kilo\nothing}} & \SI{69.00}{\nothing} \\
DeCEF-ResNet-32 (16, 32, 64)\textsuperscript{4}  & {\SI{92.45}{\percent}} & {\SI{148.00}{\kilo\nothing}} & \SI{32.42}{\nothing} \\
\midrule\multicolumn{4}{l}{{\bf (b) Related work}} \\\hline
ResRep ResNet-110\textsuperscript{5} \cite{ding2020lossless} & {\SI{94.62}{\percent}} & {} & \SI{105.68}{\nothing} \\
C-SGD-5/8 ResNet-110\textsuperscript{6} \cite{ding2019centripetal} & {\SI{94.44}{\percent}} & {} & \SI{98.91}{\nothing} \\
HRank ResNet-110 1\textsuperscript{7} \cite{lin2020hrank} & {\SI{94.23}{\percent}} & {\SI{1.04}{\mega\nothing}} & \SI{148.70}{\nothing} \\
SASL ResNet-110\textsuperscript{8} \cite{shi2021} & {\SI{93.99}{\percent}} & {\SI{1.17}{\mega\nothing}} & \SI{122.15}{\nothing} \\
SFP ResNet-56 10\%\textsuperscript{9} \cite{he2018soft} & {\SI{93.89}{\percent}} & {} & \SI{107.00}{\nothing} \\
SASL ResNet-56\textsuperscript{10} \cite{shi2021} & {\SI{93.88}{\percent}} & {\SI{689.35}{\kilo\nothing}} & \SI{80.44}{\nothing} \\
SFP ResNet-110 30\%\textsuperscript{11} \cite{he2018soft} & {\SI{93.86}{\percent}} & {} & \SI{150.00}{\nothing} \\
SASL* ResNet-110\textsuperscript{12} \cite{shi2021} & {\SI{93.80}{\percent}} & {\SI{786.04}{\kilo\nothing}} & \SI{75.36}{\nothing} \\
LFPC ResNet-110\textsuperscript{13} \cite{He_2020_CVPR} & {\SI{93.79}{\percent}} & {} & \SI{101.00}{\nothing} \\
SFP ResNet-56 30\%\textsuperscript{14} \cite{he2018soft} & {\SI{93.78}{\percent}} & {} & \SI{74.00}{\nothing} \\
FPGM-only 40\% ResNet-110\textsuperscript{15} \cite{he2019filter} & {\SI{93.74}{\percent}} & {} & \SI{121.00}{\nothing} \\
ResRep ResNet-56 1\textsuperscript{16} \cite{ding2020lossless} & {\SI{93.73}{\percent}} & {} & \SI{59.09}{\nothing} \\
LFPC ResNet-56 1\textsuperscript{17} \cite{He_2020_CVPR} & {\SI{93.72}{\percent}} & {} & \SI{66.40}{\nothing} \\
SASL* ResNet-56\textsuperscript{18} \cite{shi2021} & {\SI{93.58}{\percent}} & {\SI{538.90}{\kilo\nothing}} & \SI{53.84}{\nothing} \\
HRank ResNet-56 1\textsuperscript{19} \cite{lin2020hrank} & {\SI{93.52}{\percent}} & {\SI{710.00}{\kilo\nothing}} & \SI{88.72}{\nothing} \\
FPGM-only 40\% ResNet-56\textsuperscript{20} \cite{he2019filter} & {\SI{93.49}{\percent}} & {} & \SI{59.40}{\nothing} \\
SFP ResNet-56 20\%\textsuperscript{21} \cite{he2018soft} & {\SI{93.47}{\percent}} & {} & \SI{89.80}{\nothing} \\
C-SGD-5/8 ResNet-56\textsuperscript{22} \cite{ding2019centripetal} & {\SI{93.44}{\percent}} & {} & \SI{49.13}{\nothing} \\
GBN-40\textsuperscript{23} \cite{you2019gate} & {\SI{93.43}{\percent}} & {\SI{395.25}{\kilo\nothing}} & \SI{50.07}{\nothing} \\
GAL-0.6 ResNet-56\textsuperscript{24} \cite{lin2019towards} & {\SI{93.38}{\percent}} & {\SI{750.00}{\kilo\nothing}} & \SI{78.30}{\nothing} \\
HRank ResNet-110 2\textsuperscript{25} \cite{lin2020hrank} & {\SI{93.36}{\percent}} & {\SI{700.00}{\kilo\nothing}} & \SI{105.70}{\nothing} \\
SFP ResNet-56 40\%\textsuperscript{26} \cite{he2018soft} & {\SI{93.35}{\percent}} & {} & \SI{59.40}{\nothing} \\
LFPC ResNet-56 2\textsuperscript{27} \cite{He_2020_CVPR} & {\SI{93.34}{\percent}} & {} & \SI{59.10}{\nothing} \\
SFP ResNet-32 10\%\textsuperscript{28} \cite{he2018soft} & {\SI{93.22}{\percent}} & {} & \SI{58.60}{\nothing} \\
HRank ResNet-56 2\textsuperscript{29} \cite{lin2020hrank} & {\SI{93.17}{\percent}} & {\SI{490.00}{\kilo\nothing}} & \SI{62.72}{\nothing} \\
ResNet-56-pruned-A\textsuperscript{30} \cite{Li2016} & {\SI{93.10}{\percent}} & {\SI{770.10}{\kilo\nothing}} & \SI{112.00}{\nothing} \\
GBN-30\textsuperscript{31} \cite{you2019gate} & {\SI{93.07}{\percent}} & {\SI{283.05}{\kilo\nothing}} & \SI{37.27}{\nothing} \\
ResNet-56-pruned-B\textsuperscript{32} \cite{Li2016} & {\SI{93.06}{\percent}} & {\SI{733.55}{\kilo\nothing}} & \SI{90.90}{\nothing} \\
ResNet-110-pruned-B\textsuperscript{33} \cite{Li2016} & {\SI{93.00}{\percent}} & {\SI{1.16}{\mega\nothing}} & \SI{115.00}{\nothing} \\
NISP-56\textsuperscript{34} \cite{Yu2018} & {\SI{92.99}{\percent}} & {\SI{487.90}{\kilo\nothing}} & \SI{81.00}{\nothing} \\
FPGM-mix 40\% ResNet-32\textsuperscript{35} \cite{he2019filter} & {\SI{92.82}{\percent}} & {} & \SI{32.30}{\nothing} \\
GAL-0.5 ResNet-110\textsuperscript{36} \cite{lin2019towards} & {\SI{92.74}{\percent}} & {\SI{950.00}{\kilo\nothing}} & \SI{130.20}{\nothing} \\
ResRep ResNet-56 2\textsuperscript{37} \cite{ding2020lossless} & {\SI{92.67}{\percent}} & {} & \SI{27.82}{\nothing} \\
HRank ResNet-110 3\textsuperscript{38} \cite{lin2020hrank} & {\SI{92.65}{\percent}} & {\SI{530.00}{\kilo\nothing}} & \SI{79.30}{\nothing} \\
LFPC ResNet-32\textsuperscript{39} \cite{He_2020_CVPR} & {\SI{92.12}{\percent}} & {} & \SI{32.70}{\nothing} \\
\end{tabular}
}
  \end{center}
  \caption{Comparison to state-of-the-art model reduction techniques on CIFAR-10.}
  \label{tab:cifar10_results}
\end{table}

\subsection{Dataset ImageNet (ILSVRC-2012)}
\label{sec:exp_imagenet}
\subsubsection{Accuracy versus complexity}
\label{sec:compare}
To further compare our algorithms to the state-of-the-art, we use the standard dataset ImageNet (ILSVRC-2012) by~\citet{Deng2009}.
ImageNet has \SI{1.2}{\mega\nothing} training images and \SI{50}{\kilo\nothing} validation images of 1000 object classes, commonly evaluated by Top-1 and Top-5 accuracy.
We use the networks ResNet-50 v2~\citet{He2016b}, DenseNet-121~\citet{huang2016densely} and HRNet-W18-C~\citet{wang2020deep} as the base networks. % for both use case 1\&2.
The results are visualized in Fig.~\ref{fig:imagenet_comparison} for Top-1 accuracy (Top-5 accuracy can be found in the supplementary material).
   \begin{figure}[h!]
    \centering
    \includegraphics[width=1.0\linewidth]{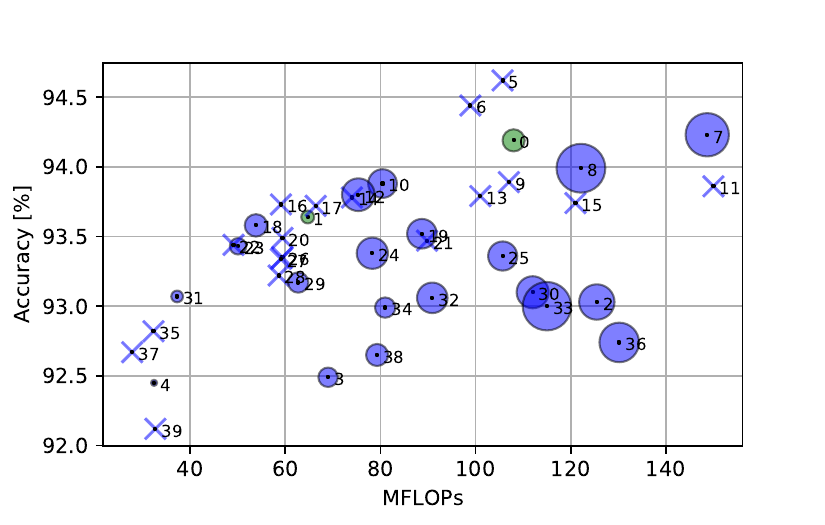}
    \caption{Ball chart for CIFAR-10, where the size of the ball indicates the number of trainable parameters. For papers that have not reported the \gls{flops}, we use a cross instead of a ball to represent them. The exact values are reported in Table~\ref{tab:cifar10_results}. The number in each ball is the network ID, which is indicated as the superscript of each entry in Table~\ref{tab:cifar10_results}.}
    \label{fig:cifar_comparison}
  \end{figure}

The hyperparameters used in \gls{scef}-ResNet-50 are determined by the deterministic rules presented in {\bf h1}, {\bf h2} and {\bf h3}. % In particular, the ranks for SCEFC-ResNet-50 are: block 1: (4, 4, 7), block 2: (1, 4, 7, 6), block 3: (1, 4, 6, 6, 6, 4) and block 4: (1, 1, 1).
% Uncomment for conference
% More details and additional experimental results can be found in the supplementary material.
For each setup, we have five runs and report the average accuracy and its standard deviation in the supplementary material.
  \begin{figure}[ht!]
    \centering
    \includegraphics[width=1.0\linewidth]{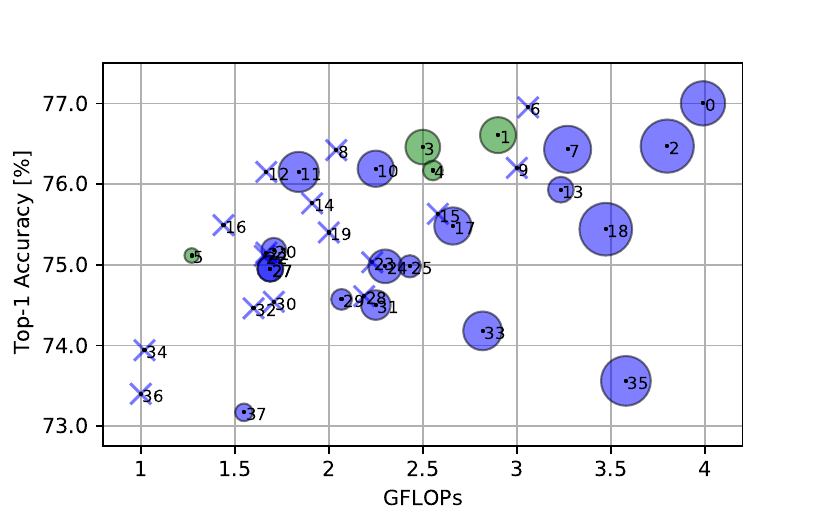}
    \caption{Ball chart for ImageNet Top-1 accuracy with the same set up as Fig.~\ref{fig:cifar_comparison}. The corresponding values can be found in Table~\ref{tab:imagenet_results}.}
    \label{fig:imagenet_comparison}
  \end{figure}
From the experiments, we see the trade-off between the two rank decay mechanisms: linear decay is less aggressive, which yield to a better accuracy, whereas logarithmic decay reduce a greater number of FLOPs while still having a decent accuracy.
To further validate \gls{scef}, we run the same experiments on three commonly used base networks. The results are reported in Tab.~\ref{tab:comparison} and Tab.~\ref{tab:imagenet_results} to compare with the corresponding base network and state-of-the-art model reduction techniques.
\begin{table}[h!]
  \begin{center}
    \resizebox{\columnwidth}{!}{\begin{tabular}{|c|c|c|c|c|c|c|}
    \hline
                                  & layers       & rank decay  & Top-1         & Top-5         & params       & (G)FLOPs   \\
    \hline
    \multirow{3}{*}{ResNet-50}    & \gls{conv2d} & None        & 76.47 \%      & 93.21\%       & 25.56M       & 3.80       \\
                                  & \gls{scef}   & Linear      & {\bf 76.61}\% & 93.22\%       & 17.27M       & 2.90       \\
                                  & \gls{scef}   & Logarithmic & 76.46\%       & {\bf 93.24\%} & {\bf 16.64M} & {\bf 2.50} \\
    \hline
    \multirow{3}{*}{DenseNet-121} & \gls{conv2d} & None        & 74.81\%       & 92.32\%       & 79.79M       & 2.83       \\
                                  & \gls{scef}   & Linear      & {\bf 74.85\%} & {\bf 92.61\%} & 72.10M       & 2.81       \\
                                  & \gls{scef}   & Logarithmic & 74.40\%       & 91.89\%       & {\bf 62.92M} & {\bf 2.11} \\
    \hline
    \multirow{3}{*}{HRNet-W18-C}  & \gls{conv2d} & None        & {\bf 77.00}\%       & {\bf 93.50\%}       & 21.30M       & 3.99       \\
                                  & \gls{scef}   & Linear      & 76.17\% & 92.99\% & 9.490M       & 2.55       \\
                                  & \gls{scef}   & Logarithmic & 75.11\%       & 92.47\%       & {\bf 7.05M}  & {\bf 1.27} \\
    \hline
\end{tabular}}
  \end{center}
  \caption{Comparison to the base networks on ImageNet.}
  \label{tab:comparison}
\end{table}
\begin{table}[ht!]
  \begin{center}
    \resizebox{\columnwidth}{!}{\begin{tabular}{lrrrr}\toprule
Network & Top-5 Acc. & Top-1 Acc. & No. param. & GFLOPs\\
\midrule\multicolumn{5}{l}{{\bf (a) \Glsentryshort{scef} vs baseline network}} \\\hline
HRNet-W18-C\textsuperscript{0} \cite{wang2020deep} & {\SI{93.50}{\percent}} & {\SI{77.00}{\percent}} & {\SI{21.30}{\mega\nothing}} & \SI{3.99}{\nothing} \\
DeCEF-ResNet-50 (lin decay)\textsuperscript{1}  & {\SI{93.22}{\percent}} & {\SI{76.61}{\percent}} & {\SI{17.27}{\mega\nothing}} & \SI{2.90}{\nothing} \\
ResNet-50\textsuperscript{2} \cite{He2016b} & {\SI{93.21}{\percent}} & {\SI{76.47}{\percent}} & {\SI{25.56}{\mega\nothing}} & \SI{3.80}{\nothing} \\
DeCEF-ResNet-50 (log decay)\textsuperscript{3}  & {\SI{93.24}{\percent}} & {\SI{76.46}{\percent}} & {\SI{16.64}{\mega\nothing}} & \SI{2.50}{\nothing} \\
DeCEF-HRNet-W18-C (lin decay)\textsuperscript{4}  & {\SI{92.99}{\percent}} & {\SI{76.17}{\percent}} & {\SI{9.49}{\mega\nothing}} & \SI{2.55}{\nothing} \\
DeCEF-HRNet-W18-C (log decay)\textsuperscript{5}  & {\SI{92.47}{\percent}} & {\SI{75.11}{\percent}} & {\SI{7.05}{\mega\nothing}} & \SI{1.27}{\nothing} \\
\midrule\multicolumn{5}{l}{{\bf (b) Related work}} \\\hline
GFP ResNet-50 1\textsuperscript{6} \cite{liu21ab} & {} & {\SI{76.95}{\percent}} & {} & \SI{3.06}{\nothing} \\
Taylor-FO-BN-91\%\textsuperscript{7} \cite{Molchanov_2019_CVPR} & {} & {\SI{76.43}{\percent}} & {\SI{22.60}{\mega\nothing}} & \SI{3.27}{\nothing} \\
GFP ResNet-50 2\textsuperscript{8} \cite{liu21ab} & {} & {\SI{76.42}{\percent}} & {} & \SI{2.04}{\nothing} \\
MetaPruning 0.85 ResNet-50\textsuperscript{9} \cite{liu2019metapruning} & {} & {\SI{76.20}{\percent}} & {} & \SI{3.00}{\nothing} \\
GBN-60\textsuperscript{10} \cite{you2019gate} & {\SI{92.83}{\percent}} & {\SI{76.19}{\percent}} & {\SI{17.42}{\mega\nothing}} & \SI{2.25}{\nothing} \\
ResNet-50 GAL-0.5-joint\textsuperscript{11} \cite{lin2019towards} & {\SI{90.82}{\percent}} & {\SI{76.15}{\percent}} & {\SI{19.31}{\mega\nothing}} & \SI{1.84}{\nothing} \\
ResRep ResNet-50 1\textsuperscript{12} \cite{ding2020lossless} & {\SI{92.90}{\percent}} & {\SI{76.15}{\percent}} & {} & \SI{1.67}{\nothing} \\
SSS-ResNetXt-41\textsuperscript{13} \cite{huang2018data} & {\SI{93.00}{\percent}} & {\SI{75.93}{\percent}} & {\SI{12.40}{\mega\nothing}} & \SI{3.23}{\nothing} \\
SASL\textsuperscript{14} \cite{shi2021} & {\SI{92.82}{\percent}} & {\SI{75.76}{\percent}} & {} & \SI{1.91}{\nothing} \\
AOFP-C1\textsuperscript{15} \cite{ding19a} & {\SI{92.69}{\percent}} & {\SI{75.63}{\percent}} & {} & \SI{2.58}{\nothing} \\
ResRep ResNet-50 2\textsuperscript{16} \cite{ding2020lossless} & {\SI{92.55}{\percent}} & {\SI{75.49}{\percent}} & {} & \SI{1.44}{\nothing} \\
Taylor-FO-BN-81\%\textsuperscript{17} \cite{Molchanov_2019_CVPR} & {} & {\SI{75.48}{\percent}} & {\SI{17.90}{\mega\nothing}} & \SI{2.66}{\nothing} \\
SSS-ResNet-41\textsuperscript{18} \cite{huang2018data} & {\SI{92.61}{\percent}} & {\SI{75.44}{\percent}} & {\SI{25.30}{\mega\nothing}} & \SI{3.47}{\nothing} \\
MetaPruning 0.75 ResNet-50\textsuperscript{19} \cite{liu2019metapruning} & {} & {\SI{75.40}{\percent}} & {} & \SI{2.00}{\nothing} \\
GBN-50\textsuperscript{20} \cite{you2019gate} & {\SI{92.41}{\percent}} & {\SI{75.18}{\percent}} & {\SI{11.91}{\mega\nothing}} & \SI{1.71}{\nothing} \\
SASL*\textsuperscript{21} \cite{shi2021} & {\SI{92.47}{\percent}} & {\SI{75.15}{\percent}} & {} & \SI{1.67}{\nothing} \\
AOFP-C2\textsuperscript{22} \cite{ding19a} & {\SI{92.28}{\percent}} & {\SI{75.11}{\percent}} & {} & \SI{1.66}{\nothing} \\
ResNet-50 FPGM-only 30\%\textsuperscript{23} \cite{he2019filter} & {\SI{92.40}{\percent}} & {\SI{75.03}{\percent}} & {} & \SI{2.23}{\nothing} \\
ResNet-50 HRank 1\textsuperscript{24} \cite{lin2020hrank} & {\SI{92.33}{\percent}} & {\SI{74.98}{\percent}} & {\SI{16.15}{\mega\nothing}} & \SI{2.30}{\nothing} \\
SSS-ResNetXt-38\textsuperscript{25} \cite{huang2018data} & {\SI{92.50}{\percent}} & {\SI{74.98}{\percent}} & {\SI{10.70}{\mega\nothing}} & \SI{2.43}{\nothing} \\
DCP\textsuperscript{26} \cite{zhuang2018discrimination} & {\SI{92.32}{\percent}} & {\SI{74.95}{\percent}} & {\SI{12.41}{\mega\nothing}} & \SI{1.69}{\nothing} \\
DCP\textsuperscript{27} \cite{zhuang2018discrimination} & {\SI{92.32}{\percent}} & {\SI{74.95}{\percent}} & {\SI{12.41}{\mega\nothing}} & \SI{1.69}{\nothing} \\
SFP\textsuperscript{28} \cite{he2018soft} & {\SI{92.06}{\percent}} & {\SI{74.61}{\percent}} & {} & \SI{2.19}{\nothing} \\
SSS-ResNetXt-35-A\textsuperscript{29} \cite{huang2018data} & {\SI{92.17}{\percent}} & {\SI{74.57}{\percent}} & {\SI{10.00}{\mega\nothing}} & \SI{2.07}{\nothing} \\
C-SGD-50\textsuperscript{30} \cite{ding2019centripetal} & {\SI{92.09}{\percent}} & {\SI{74.54}{\percent}} & {} & \SI{1.71}{\nothing} \\
Taylor-FO-BN-72\%\textsuperscript{31} \cite{Molchanov_2019_CVPR} & {} & {\SI{74.50}{\percent}} & {\SI{14.20}{\mega\nothing}} & \SI{2.25}{\nothing} \\
LFPC\textsuperscript{32} \cite{He_2020_CVPR} & {\SI{92.04}{\percent}} & {\SI{74.46}{\percent}} & {} & \SI{1.60}{\nothing} \\
SSS-ResNet-32\textsuperscript{33} \cite{huang2018data} & {\SI{91.91}{\percent}} & {\SI{74.18}{\percent}} & {\SI{18.60}{\mega\nothing}} & \SI{2.82}{\nothing} \\
GFP ResNet-50 3\textsuperscript{34} \cite{liu21ab} & {} & {\SI{73.94}{\percent}} & {} & \SI{1.02}{\nothing} \\
Pruned-90\textsuperscript{35} \cite{Luo2017b} & {\SI{91.60}{\percent}} & {\SI{73.56}{\percent}} & {\SI{23.89}{\mega\nothing}} & \SI{3.58}{\nothing} \\
MetaPruning 0.5 ResNet-50\textsuperscript{36} \cite{liu2019metapruning} & {} & {\SI{73.40}{\percent}} & {} & \SI{1.00}{\nothing} \\
SSS-ResNetXt-35-B\textsuperscript{37} \cite{huang2018data} & {\SI{91.58}{\percent}} & {\SI{73.17}{\percent}} & {\SI{8.50}{\mega\nothing}} & \SI{1.55}{\nothing} \\
\end{tabular}
}
  \end{center}
  \caption{Comparison to state-of-the-art model reduction techniques on ImageNet.} %: \gls{scef} layers applied to the architectures of ResNet-50, DenseNet-121 and HRNet-W18-C on ImageNet.}
  \label{tab:imagenet_results}
\end{table}
% \subsubsection{Robustness}
% To empirically evaluate the robustness of \gls{scef} network compared to its base network, we use the benchmark dataset ImageNet-C created by \citet{hendrycks2019benchmarking}. The base network is ResNet-50 and the result is shown in Fig~\ref{fig:imagenet_c_resnet_scef}.
% \begin{figure}[h!]
%   \centering
%   \includegraphics[width=1\linewidth]{}
%   \caption{Robustness evaluation using ImageNet-C, a perturbed version of ImageNet. The level indicates the level of the perturbation.}
%   \label{fig:imagenet_c_resnet_scef}
% \end{figure}
\subsection{Limitation}
This work focuses on the model reduction aspect given the observations of the low rank behaviors. The analysis of these behaviors needs to be further explored as a future direction, which brings the limitation that the rules for choosing the hyperparameters are rather heuristic.
\section{Conclusion and future work}
In this paper, we propose a new methodology to observe and analyze the redundancy in a \gls{cnn}. Motivated by our observations of the low rank behaviors in vectorized \gls{conv2d} filters, we present a layer structure \gls{scef} as an alternative parameterization to \gls{conv2d} filters for the purpose of reducing their complexity in terms of trainable parameters and \gls{flops}.
Our experiments have shown that in a convolutional layer with filter size $h\times h$, it is not necessary to have more than $h^2$ eigen-filters given the training strategy in Sec.~\ref{sec:algorithms}.
% It is important to allow training of the eigen-filters to achieve the highest accuracy.

In terms of the accuracy-to-complexity ratio, it is beneficial to use more coefficients (i.e. output channels) with fewer eigen-filters in \gls{scef} layers.
The \gls{scef} layer is simple to implement in most deep learning frameworks using depthwise separable convolutions with a new training strategy. % that are recently made available in common deep learning frameworks and gaining popularity due to its computational efficiency.
With the deterministic rules for choosing hyperparameters, it is easy to design and reproduce the results.
From our observations, the underlying subspace structure is a commonly shared property among different network architectures and topologies, which provides insights to the design and analysis of \gls{cnn}s.

As future directions, first we will further analyze this low rank structure to improve rank decay functions by designing more sophisticated strategies. For instance, in some network architectures, the effective rank first increases and then quickly decreases with respect to the depth. This is a phenomenon that we would like to study further. Moreover, since the \gls{scef} layer can be implemented by the depthwise separable convolutions with a new training strategy, a second future direction is to modify and train the traditional depthwise separable convolutional layers in well-known networks using \gls{scef} to reduce the model complexity. Finally, during the experiments, we have come up with several hypotheses regarding the low rank behaviors in deep neural networks, which we plan to explore to better understand and interpret a \gls{cnn} from this perspective.

\clearpage
%low rank and more filters in \gls{scef} layers.
% \subsubsection*{Acknowledgments}
% This work was partially supported by the Wallenberg AI, Autonomous Systems and Software Program (WASP) funded by the Knut and Alice Wallenberg Foundation.

%%% Local Variables:
%%% mode: latex
%%% TeX-master: "main"
%%% End:

\bibliographystyle{plainnat}

\bibliography{library}

\end{document}

% --- supplement: supplementary.tex ---

\title{Supplementary Material}

%\maketitle
\section{Proof of Lemma 1}
\begin{proof}
    For each input channel $i$, given an additive perturbation matrix $\Delta\mathbf{I}_i$, let $\tilde{\mathbf{I}}_i=\mathbf{I}_i+\Delta\mathbf{I}_i$. Given optimal parameters of kernel $j$ expressed as $\mathbf{w}_{j}^{(i)}=\sum_{k=1}^ra_{k,j}^{(i)}\mathbf{u}_{k}^{(i)}$, which are learned from the training data,
    the output of the convolutional layer is
    \begin{eqnarray*}
        \mathbf{I}_j&=&\sum_{i}\left(\mathbf{I}_i+\Delta\mathbf{I}_i\right)*\mathbf{w}_{j}^{(i)}
        =\sum_{i}\left(\mathbf{I}_i+\Delta\mathbf{I}_i\right)*\sum_{k=1}^ra_{k,j}^{(i)}\mathbf{u}_{k}^{(i)}\\
        &=&\sum_{i}\mathbf{I}_i*\sum_{k=1}^ra_{k,j}^{(i)}\mathbf{u}_{k}+\sum_{i}\Delta\mathbf{I}_i*\sum_{k=1}^ra_{k,j}^{(i)}\mathbf{u}_{k}^{(i)}\\
        &=&\mathbf{I}_j^* + \underbrace{\sum_{i}\Delta\mathbf{I}_i*\sum_{k=1}^ra_{k,j}^{(i)}\mathbf{u}_{k}^{(i)}}_{\text{perturbation term}}
    \end{eqnarray*}
    where $\mathbf{I}_j^*$ denotes the optimal feature map.
    By using the infinity norm to characterize the effect of the perturbation, we have:
    \begin{eqnarray}
        \label{eqa:l2_0}
        \left|\left|\mathbf{I}_j^* - \mathbf{I}_j\right|\right|_{\infty} &=&\left|\left|\sum_{i}\Delta\mathbf{I}_i*\sum_{k=1}^ra_{k,j}^{(i)}\mathbf{u}_{k}^{(i)}\right|\right|_{\infty}\\
        &\leq&  \sum_{i}\left|\left|\Delta\mathbf{I}_i*\sum_{k=1}^ra_{k,j}^{(i)}\mathbf{u}_{k}^{(i)}\right|\right|_{\infty}
    \end{eqnarray}
    From Young's inequality:
    \begin{eqnarray}
        \nonumber
        \text{\eqref{eqa:l2_0}} &\leq & \sum_{i}\left|\left|\Delta\mathbf{I}_i\right|\right|_2\left|\left|\sum_{k=1}^ra_{k,j}^{(i)}\mathbf{u}_{k}^{(i)}\right|\right|_2\\
        \nonumber
        &\leq&  \sum_{i}\left|\left|\Delta\mathbf{I}_i\right|\right|_2\left|\left|\sum_{k=1}^ra_{k,j}^{(i)}\mathbf{u}_{k}^{(i)}\right|\right|_2\\
        \nonumber
        &\leq& \sum_{i}\left|\left|\Delta\mathbf{I}_i\right|\right|_2\sum_{k=1}^r\left| a_{k,j}^{(i)}\right|\left|\left|\mathbf{u}_{k}^{(i)}\right|\right|_2
    \end{eqnarray}
    \begin{eqnarray}
        \label{eqa:l2_1}
        &\leq&  \sum_{i}\left|\left|\Delta\mathbf{I}_i\right|\right|_2\sum_{k=1}^r\left| a_{k,j}^{(i)}\right|\left|\left|\mathbf{u}_{k}^{(i)}\right|\right|_F \\
        \nonumber
        &\leq&  \sum_{i}\left|\left|\Delta\mathbf{I}_i\right|\right|_2\left|\left|\mathbf{a}_j^{(i)}\right|\right|_1\sum_{k=1}^r\left|\left|\mathbf{u}_{k}^{(i)}\right|\right|_F
    \end{eqnarray}
    where $\mathbf{a}_j^{(i)}= \begin{bmatrix} a_{1,j}^{(i)}&\cdots & a_{r,j}^{(i)}\end{bmatrix}^{\text{T}}$ and $\|\cdot\|_F$ denotes the Frobenius norm.
    Let $\bar{\mathbf{u}}_{k}^{(i)}=\text{vect}(\mathbf{u}_{k}^{(i)})$, where $\text{vect}(\cdot)$ denotes the vectorization of a matrix. If $\bar{\mathbf{U}}^{(i)\text{T}}\bar{\mathbf{U}}^{(i)}=\mathbf{I}$, we have $\left|\left|\mathbf{u}_k^{(i)}\right|\right|_F=1$ and hence
    \begin{eqnarray}
        \nonumber
        \text{\eqref{eqa:l2_1}}  &\leq& \sum_{i}r\left|\left|\mathbf{a}_j^{(i)}\right|\right|_1\left|\left|\Delta\mathbf{I}_i\right|\right|_2\leq \sum_{i}rh\left|\left|\mathbf{a}_j^{(i)}\right|\right|_2\left|\left|\Delta\mathbf{I}_i\right|\right|_2
    \end{eqnarray}
    where $h$ is the kernel size.
    If $\left|\left|\mathbf{a}^{(i)}_{j}\right|\right|_2\leq \epsilon$, $\forall i, j$, then
    $$ \left|\left|\sum_{i}\Delta\mathbf{I}_i*\mathbf{w}_{j}^{(i)}\right|\right|_{\infty}\leq \epsilon hr\sum_{i}\left|\left|\Delta\mathbf{I}_i\right|\right|_2$$
\end{proof}

% \clearpage
% \onecolumn
% \section{Singular Values from ResNet-50}
% In Fig.~\ref{fig:example_sv_imagenet}, we present the singular values of the vectorized filters computed from the baseline network ResNet-50 trained on ImageNet using the same procedure as in Fig.~1 in the paper. We observe the same low rank effect described in Sec.~3.2.

% \begin{figure*}[h!]
%   \centering
%     \includegraphics[width=1\linewidth]{}
%   \caption{Singular values computed from ResNet-50 trained on ImageNet. Each row in the matrix represent the singular values computed using Eq.~(1). The process of these computations is demonstrated in Fig~2 in the paper. We use a shaded background to highlight the none zero singular values (with bright color) to visualize the drop of these singular values, which indicates low rank behaviors.}
%   \label{fig:example_sv_imagenet}
% \end{figure*}

%\clearpage
% \section{Rank decay}
% In Fig.~\ref{fig:ranks}, we illustrate the decay function we use for selecting the rank of each layer.
% \begin{figure}[h!]
%   \centering
%     \includegraphics[width=0.9\linewidth]{figs/ranks.pdf}
%   \caption{Choosing rank using linear decay and logarithmic decay. }
%   \label{fig:ranks}
% \end{figure}

\section{Network compression using \glsentryshort{scef} layers}
One application of \glsentryshort{scef} is to use it as a model reduction technique for a pre-trained network. As discussed in the paper, this is not the main focus of \glsentryshort{scef}. Nevertheless, we propose an algorithm as follows for this type of applications.
\begin{algorithm}(\glsentryshort{scef}C-basenet)
    \label{alg:usecase2}

    \begin{itemize}
        \item[] {\bf Step 1:} Analysis described in Procedure 1.
        \item[] {\bf Step 2:} For each layer, let $\bar{\mathbf{u}}_{k}^{(i)}$ be the columns of $\bar{\mathbf{U}}^{(i)}$. Approximate $\mathbf{w}_j^{(i)}$ by
              $  \mathbf{w}_j^{(i)}\approx \sum^{r_i}_{k=1}a_{k,j}^{(i)}\mathbf{u}_k^{(i)}$,
              where $\mathbf{u}_{k}^{(i)}$ is obtained by reshaping $\bar{\mathbf{u}}_{k}^{(i)}$ into a $h\times h$ matrix.
        \item[] {\bf Step 3 (optional):} Network fine-tuning by freezing the eigen-filters and training the other trainable parameters.
    \end{itemize}
\end{algorithm}

\section{Demonstration of singular values computed using {\bf Procedure 1}}

Figure \ref{fig:densenet121} shows the density histogram of singular values computed using Eq.~[1] in the paper. The histogram is calculated from all input channels in each convolutional layer with $K>1$. The maximum ranks of the layers in these example networks are $\min(K, c_{\text{out}})=K$, where $K=9$.
Similar low rank behaviors can be observed in Fig.~\ref{fig:network_ranks}.
\begin{figure*}[h!]
    \begin{center}
    \includegraphics[width=1.00\linewidth]{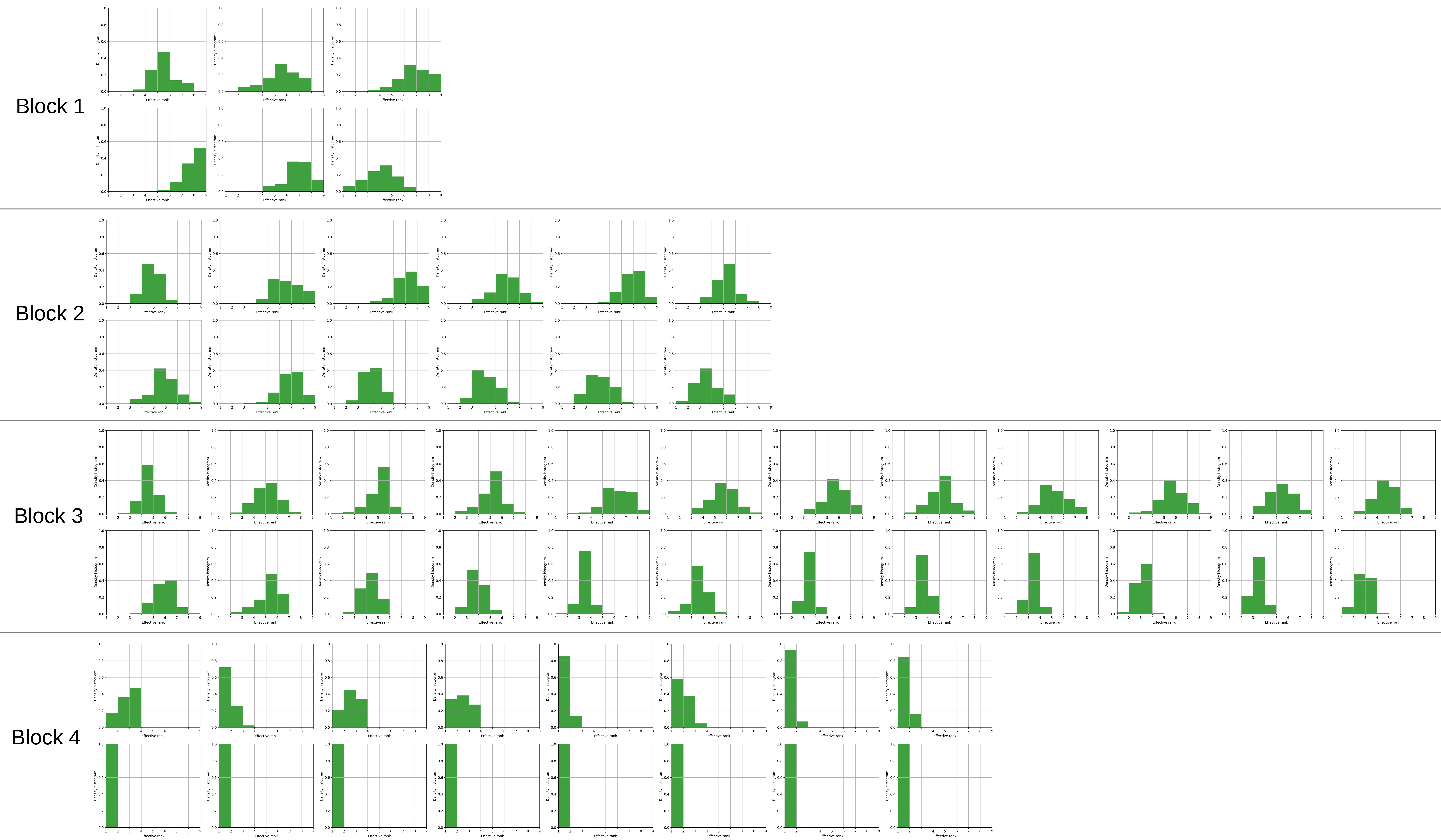}
    \caption{The density histogram (y-axis$\in [0,1]$) of the effective rank (x-axis$\in [1,9]$) estimated using Eq.~[2] in Procedure 1 for DenseNet-121 trained on ImageNet. The statistics are computed over all input channels $i$ in that layer. We see the decreasing trend of the effective ranks with respect to the depth of the network.}
    \label{fig:densenet121}
    \end{center}
\end{figure*}
\begin{figure}[hb!]
    \centering
    \includegraphics[width=0.8\linewidth]{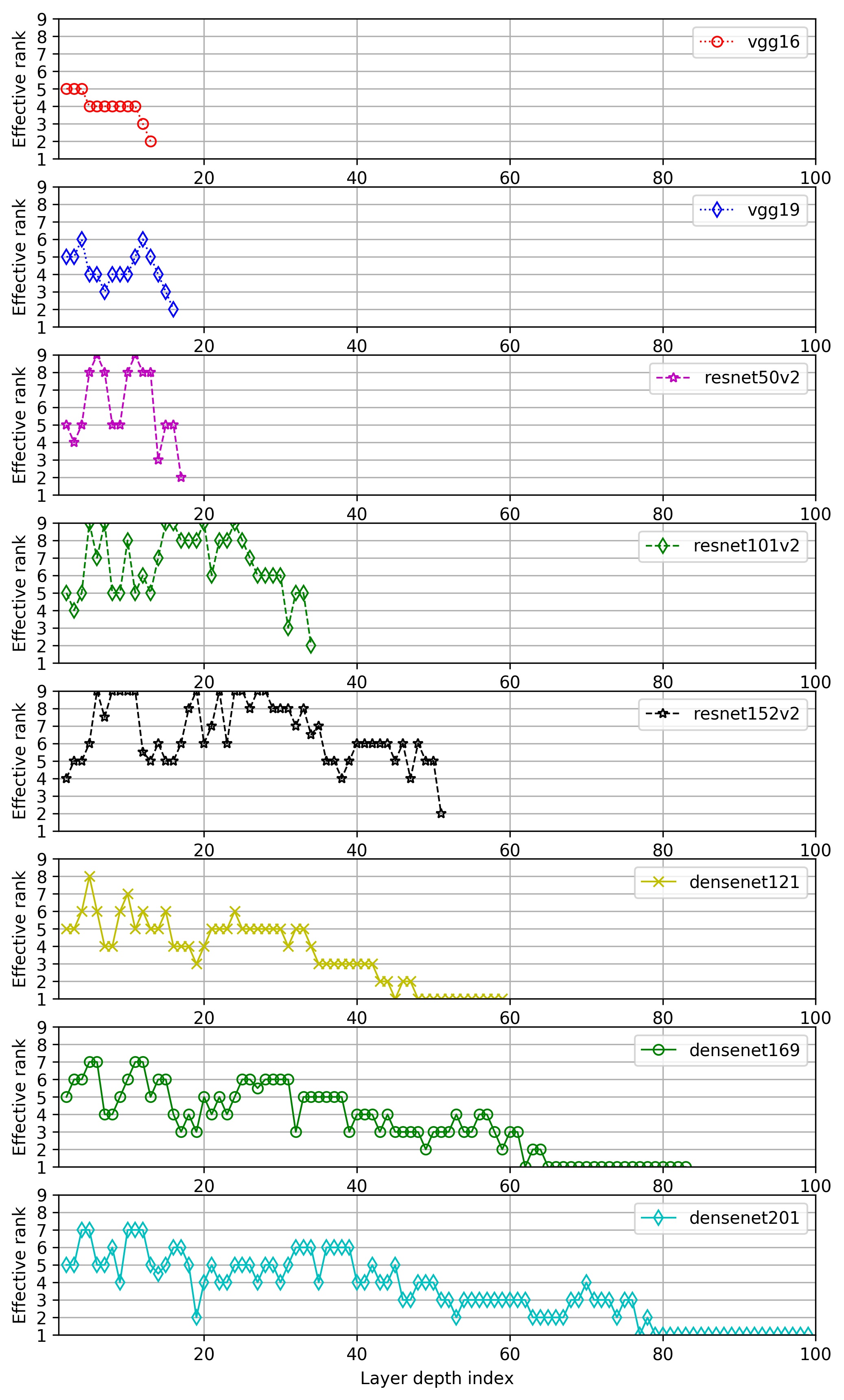}
    \caption{Effective rank (cf.~Eq.~[3] in the paper) versus layer depth. In these networks, we observe decreasing trend of the effective ranks when a network goes deeper. In this figure, we show this effect in the networks VGG, ResNet and DenseNet.}
    \label{fig:network_ranks}
\end{figure}

\section{Convergence video}
In this supplementary material, we include videos (in the file called ``effective\_rank\_converge\_video.zip'') to show some examples of how the effective ranks (cf.~Eq.[2] in the paper) in each layer converge over training epochs. The title of each video indicates the layer index, i.e. the larger the index, the deeper the layer is. The network and data used here are the DenseNet-121 and ImageNet, respectively.

In the video, the leftmost rectangular box shows the singular values computed from all the output channels (filters) for each input channel. Each row in this figure contains the singular values for one input channel. The image in the middle is the histogram density of the effective rank. Each frame in this video shows the singular values and effective ranks computed from one epoch. Finally, the image on the right shows the convergence of this effective rank over raining epochs.
Note that due to the limit on the file size, we only show the convergence for every fourth layer.

\section{Summary of results}

% In the paper, we have only shown the Top-1 accuracy. In Fig.~\ref{fig:imagenet_comparison_top5}, we show the comparison of Top-5 accuracy versus the number of parameters and FLOPs on the ImageNet dataset.
% \begin{figure}[h!]
%     \centering
%     \includegraphics[width=0.8\linewidth]{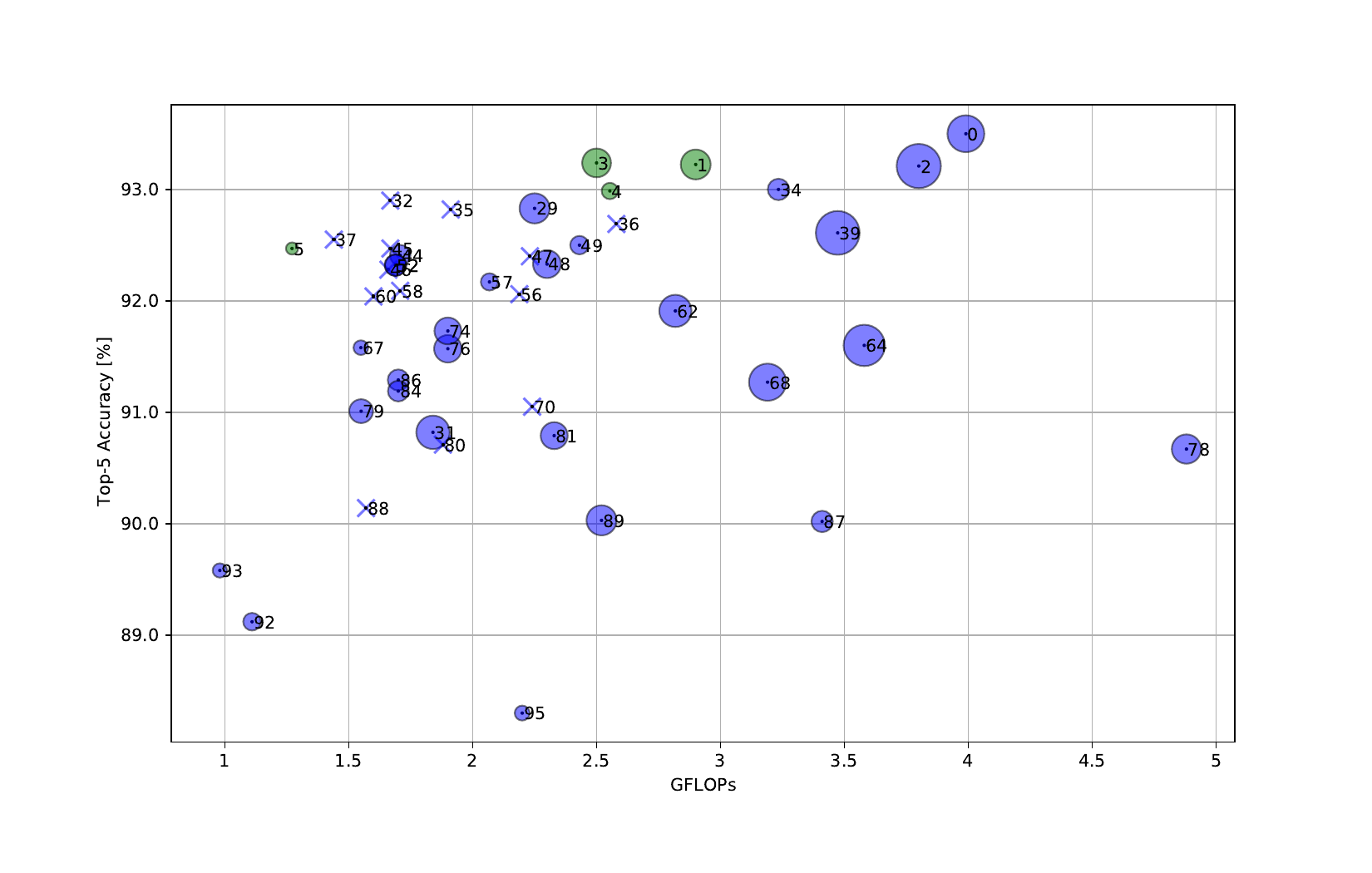}
%     \caption{Top-5 accuracy (y-axis) versus FLOPs (x-axis) and the number of trainable parameters (indicated by the radius of each ball) for ImageNet. In this figure, we show how \glsentryshort{scef}-ResNet-50 compares to other ResNet-50 variant networks.}
%     \label{fig:imagenet_comparison_top5}
% \end{figure}
In this section, we present more detailed results from our experiments.
Result comparisons can be found in Tab.~\ref{tab:comparision} and Tab.~\ref{tab:comparision_imagenet} for CIFAR-10 and ImageNet, respectively. These results are also summarized as ball charts in Fig.~\ref{fig:cifar_comparison} for CIFAR-10 and in Fig.~\ref{fig:imagenet_comparison_top5} and Fig.~\ref{fig:imagenet_comparison_top1} for ImageNet.

\begin{figure}[ht!]
    \centering
    \includegraphics[width=1\linewidth]{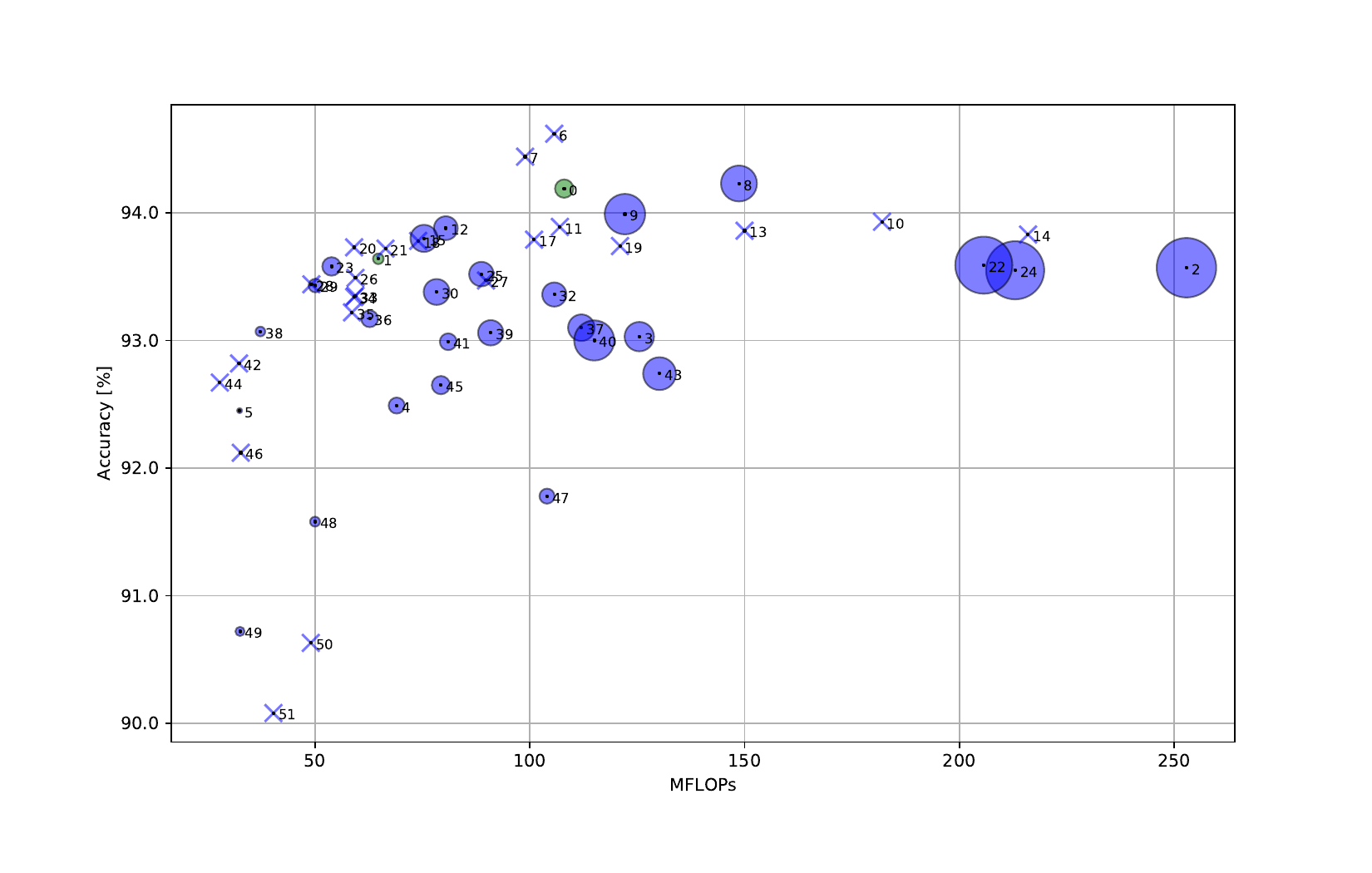}
    \caption{Accuracy (y-axis) versus FLOPs (x-axis) and number of trainable parameters (indicated by the radius of each ball) for CIFAR-10. Algorithms that have not reported their number of parameters are denoted as crosses (x) in the chart. More details can be found in Table \ref{tab:comparision}. The number in the ball chart is the id of the algorithm that is indicated as the superscript in the table.}
    \label{fig:cifar_comparison}
\end{figure}

% \begin{figure}[ht!]
%     \centering
%     \includegraphics[width=1.0\linewidth]{}
%     \caption{Accuracy (y-axis) versus number of trainable parameters (x-axis) for CIFAR-10.}
%     \label{fig:cifar_comparison_params}
% \end{figure}

\begin{figure}[ht!]
    \centering
    \includegraphics[width=1\linewidth]{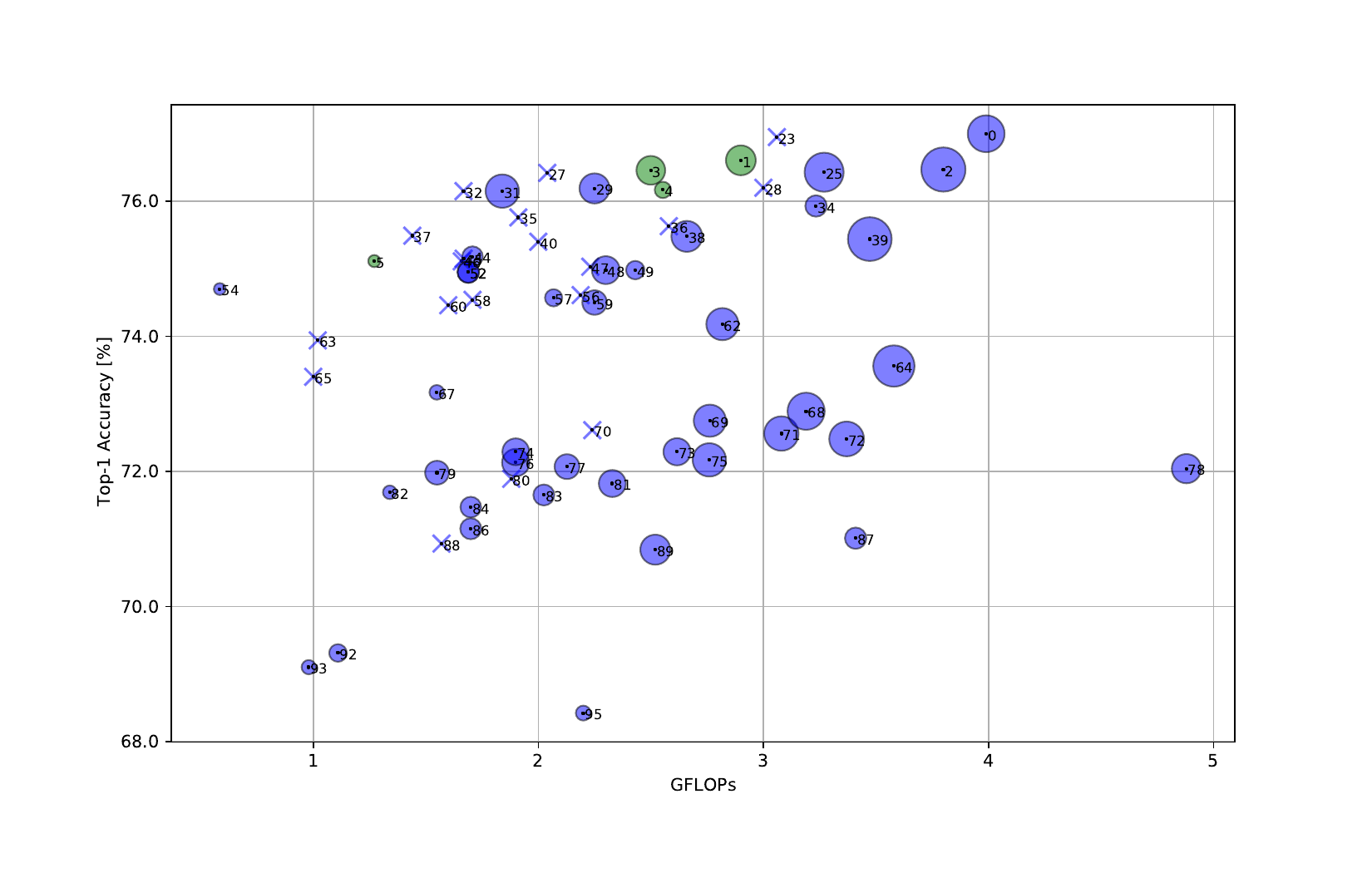}
    \caption{Top-1 accuracy (y-axis) versus FLOPs (x-axis) and number of trainable parameters (indicated by the radius of each ball) for ImageNet. Algorithms that have not reported their number of parameters are denoted as crosses (x) in the chart. More details can be found in Table \ref{tab:comparision_imagenet}. The number in the ball chart is the id of the algorithm that is indicated as the superscript in the table.}
    \label{fig:imagenet_comparison_top1}
\end{figure}

\begin{figure}[ht!]
    \centering
    \includegraphics[width=1\linewidth]{supplementary_}
    \caption{Top-5 accuracy (y-axis) versus FLOPs (x-axis) and number of trainable parameters (indicated by the radius of each ball) for ImageNet. Algorithms that have not reported their number of parameters are denoted as crosses (x) in the chart. More details can be found in Table \ref{tab:comparision_imagenet}. The number in the ball chart is the id of the algorithm that is indicated as the superscript in the table.}
    \label{fig:imagenet_comparison_top5}
\end{figure}

% \begin{figure}[ht!]
%     \centering
%     \includegraphics[width=1.0\linewidth]{}
%     \caption{Top-1 accuracy (y-axis) versus number of trainable parameters (x-axis) for ImageNet. In this figure, we show how \glsentryshort{scef}-ResNet-50 compares to other ResNet-50 variant networks (\cite{Yu2017,Boulch2018,Xie2017,Yu2018,Luo2017,Luo2017b,He2016b}).}
%     \label{fig:imagenet_comparison_params}
% \end{figure}

\begin{table}[h]
    \centering
    \caption{Comparison to related work on CIFAR-10 dataset. Part (a): our networks refer to \glsentryshort{scef} with number of filters for each bock in the parenthesis ($c_{\text{out},1}$, $c_{\text{out},2}$, $c_{\text{out},3}$), where $c_{\text{out},b}$ is the number of filters in the layers from ResNet-32 block $b$. We train the baseline network ResNet-32 for comparison. Part (b): state-of-the-art results reported in literatures.}
    \label{tab:comparision}
    \resizebox{0.7\columnwidth}{!}{\begin{tabular}{lrlrr}\toprule
Network & Acc. & Std. & No. param. & MFLOPs\\
\midrule\multicolumn{5}{l}{{\bf (a) \Glsentryshort{scef} vs baseline network}} \\\hline
DeCEF-ResNet-32 (32, 64, 128)\textsuperscript{0}  & {\SI{94.19}{\percent}} & (\SI{0.18}{\percent}) & {\SI{533.00}{\kilo\nothing}} & \SI{108.00}{\nothing} \\
DeCEF-ResNet-32 (24, 48, 96)\textsuperscript{1}  & {\SI{93.64}{\percent}} & (\SI{0.16}{\percent}) & {\SI{311.00}{\kilo\nothing}} & \SI{64.72}{\nothing} \\
ResNet-110\textsuperscript{2} \cite{He2016} & {\SI{93.57}{\percent}} &  & {\SI{1.72}{\mega\nothing}} & \SI{252.89}{\nothing} \\
ResNet-56\textsuperscript{3} \cite{He2016} & {\SI{93.03}{\percent}} &  & {\SI{850.00}{\kilo\nothing}} & \SI{125.49}{\nothing} \\
ResNet-32\textsuperscript{4} \cite{He2016} & {\SI{92.49}{\percent}} &  & {\SI{467.00}{\kilo\nothing}} & \SI{69.00}{\nothing} \\
DeCEF-ResNet-32 (16, 32, 64)\textsuperscript{5}  & {\SI{92.45}{\percent}} & (\SI{0.17}{\percent}) & {\SI{148.00}{\kilo\nothing}} & \SI{32.42}{\nothing} \\
\midrule\multicolumn{5}{l}{{\bf (b) Related work}} \\\hline
ResRep ResNet-110\textsuperscript{6} \cite{ding2020lossless} & {\SI{94.62}{\percent}} &  & {} & \SI{105.68}{\nothing} \\
C-SGD-5/8 ResNet-110\textsuperscript{7} \cite{ding2019centripetal} & {\SI{94.44}{\percent}} &  & {} & \SI{98.91}{\nothing} \\
HRank ResNet-110 1\textsuperscript{8} \cite{lin2020hrank} & {\SI{94.23}{\percent}} &  & {\SI{1.04}{\mega\nothing}} & \SI{148.70}{\nothing} \\
SASL ResNet-110\textsuperscript{9} \cite{shi2021} & {\SI{93.99}{\percent}} &  & {\SI{1.17}{\mega\nothing}} & \SI{122.15}{\nothing} \\
SFP ResNet-110 20\%\textsuperscript{10} \cite{he2018soft} & {\SI{93.93}{\percent}} &  & {} & \SI{182.00}{\nothing} \\
SFP ResNet-56 10\%\textsuperscript{11} \cite{he2018soft} & {\SI{93.89}{\percent}} &  & {} & \SI{107.00}{\nothing} \\
SASL ResNet-56\textsuperscript{12} \cite{shi2021} & {\SI{93.88}{\percent}} &  & {\SI{689.35}{\kilo\nothing}} & \SI{80.44}{\nothing} \\
SFP ResNet-110 30\%\textsuperscript{13} \cite{he2018soft} & {\SI{93.86}{\percent}} &  & {} & \SI{150.00}{\nothing} \\
SFP ResNet-110 10\%\textsuperscript{14} \cite{he2018soft} & {\SI{93.83}{\percent}} &  & {} & \SI{216.00}{\nothing} \\
SASL* ResNet-110\textsuperscript{15} \cite{shi2021} & {\SI{93.80}{\percent}} &  & {\SI{786.04}{\kilo\nothing}} & \SI{75.36}{\nothing} \\
ShaResNet-164\textsuperscript{16} \cite{Boulch2018} & {\SI{93.80}{\percent}} &  & {\SI{930.00}{\kilo\nothing}} &  \\
LFPC ResNet-110\textsuperscript{17} \cite{He_2020_CVPR} & {\SI{93.79}{\percent}} &  & {} & \SI{101.00}{\nothing} \\
SFP ResNet-56 30\%\textsuperscript{18} \cite{he2018soft} & {\SI{93.78}{\percent}} &  & {} & \SI{74.00}{\nothing} \\
FPGM-only 40\% ResNet-110\textsuperscript{19} \cite{he2019filter} & {\SI{93.74}{\percent}} &  & {} & \SI{121.00}{\nothing} \\
ResRep ResNet-56 1\textsuperscript{20} \cite{ding2020lossless} & {\SI{93.73}{\percent}} &  & {} & \SI{59.09}{\nothing} \\
LFPC ResNet-56 1\textsuperscript{21} \cite{He_2020_CVPR} & {\SI{93.72}{\percent}} &  & {} & \SI{66.40}{\nothing} \\
GAL-0.1 ResNet-110\textsuperscript{22} \cite{lin2019towards} & {\SI{93.59}{\percent}} &  & {\SI{1.65}{\mega\nothing}} & \SI{205.70}{\nothing} \\
SASL* ResNet-56\textsuperscript{23} \cite{shi2021} & {\SI{93.58}{\percent}} &  & {\SI{538.90}{\kilo\nothing}} & \SI{53.84}{\nothing} \\
ResNet-110-pruned-A\textsuperscript{24} \cite{Li2016} & {\SI{93.55}{\percent}} &  & {\SI{1.68}{\mega\nothing}} & \SI{213.00}{\nothing} \\
HRank ResNet-56 1\textsuperscript{25} \cite{lin2020hrank} & {\SI{93.52}{\percent}} &  & {\SI{710.00}{\kilo\nothing}} & \SI{88.72}{\nothing} \\
FPGM-only 40\% ResNet-56\textsuperscript{26} \cite{he2019filter} & {\SI{93.49}{\percent}} &  & {} & \SI{59.40}{\nothing} \\
SFP ResNet-56 20\%\textsuperscript{27} \cite{he2018soft} & {\SI{93.47}{\percent}} &  & {} & \SI{89.80}{\nothing} \\
C-SGD-5/8 ResNet-56\textsuperscript{28} \cite{ding2019centripetal} & {\SI{93.44}{\percent}} &  & {} & \SI{49.13}{\nothing} \\
GBN-40\textsuperscript{29} \cite{you2019gate} & {\SI{93.43}{\percent}} &  & {\SI{395.25}{\kilo\nothing}} & \SI{50.07}{\nothing} \\
GAL-0.6 ResNet-56\textsuperscript{30} \cite{lin2019towards} & {\SI{93.38}{\percent}} &  & {\SI{750.00}{\kilo\nothing}} & \SI{78.30}{\nothing} \\
NISP-110\textsuperscript{31} \cite{Yu2018} & {\SI{93.38}{\percent}} &  & {\SI{976.10}{\kilo\nothing}} &  \\
HRank ResNet-110 2\textsuperscript{32} \cite{lin2020hrank} & {\SI{93.36}{\percent}} &  & {\SI{700.00}{\kilo\nothing}} & \SI{105.70}{\nothing} \\
SFP ResNet-56 40\%\textsuperscript{33} \cite{he2018soft} & {\SI{93.35}{\percent}} &  & {} & \SI{59.40}{\nothing} \\
LFPC ResNet-56 2\textsuperscript{34} \cite{He_2020_CVPR} & {\SI{93.34}{\percent}} &  & {} & \SI{59.10}{\nothing} \\
SFP ResNet-32 10\%\textsuperscript{35} \cite{he2018soft} & {\SI{93.22}{\percent}} &  & {} & \SI{58.60}{\nothing} \\
HRank ResNet-56 2\textsuperscript{36} \cite{lin2020hrank} & {\SI{93.17}{\percent}} &  & {\SI{490.00}{\kilo\nothing}} & \SI{62.72}{\nothing} \\
ResNet-56-pruned-A\textsuperscript{37} \cite{Li2016} & {\SI{93.10}{\percent}} &  & {\SI{770.10}{\kilo\nothing}} & \SI{112.00}{\nothing} \\
GBN-30\textsuperscript{38} \cite{you2019gate} & {\SI{93.07}{\percent}} &  & {\SI{283.05}{\kilo\nothing}} & \SI{37.27}{\nothing} \\
ResNet-56-pruned-B\textsuperscript{39} \cite{Li2016} & {\SI{93.06}{\percent}} &  & {\SI{733.55}{\kilo\nothing}} & \SI{90.90}{\nothing} \\
ResNet-110-pruned-B\textsuperscript{40} \cite{Li2016} & {\SI{93.00}{\percent}} &  & {\SI{1.16}{\mega\nothing}} & \SI{115.00}{\nothing} \\
NISP-56\textsuperscript{41} \cite{Yu2018} & {\SI{92.99}{\percent}} &  & {\SI{487.90}{\kilo\nothing}} & \SI{81.00}{\nothing} \\
FPGM-mix 40\% ResNet-32\textsuperscript{42} \cite{he2019filter} & {\SI{92.82}{\percent}} &  & {} & \SI{32.30}{\nothing} \\
GAL-0.5 ResNet-110\textsuperscript{43} \cite{lin2019towards} & {\SI{92.74}{\percent}} &  & {\SI{950.00}{\kilo\nothing}} & \SI{130.20}{\nothing} \\
ResRep ResNet-56 2\textsuperscript{44} \cite{ding2020lossless} & {\SI{92.67}{\percent}} &  & {} & \SI{27.82}{\nothing} \\
HRank ResNet-110 3\textsuperscript{45} \cite{lin2020hrank} & {\SI{92.65}{\percent}} &  & {\SI{530.00}{\kilo\nothing}} & \SI{79.30}{\nothing} \\
LFPC ResNet-32\textsuperscript{46} \cite{He_2020_CVPR} & {\SI{92.12}{\percent}} &  & {} & \SI{32.70}{\nothing} \\
nin-c3-lr\textsuperscript{47} \cite{Ioannou2015} & {\SI{91.78}{\percent}} &  & {\SI{438.00}{\kilo\nothing}} & \SI{104.00}{\nothing} \\
GAL-0.8 ResNet-56\textsuperscript{48} \cite{lin2019towards} & {\SI{91.58}{\percent}} &  & {\SI{290.00}{\kilo\nothing}} & \SI{49.99}{\nothing} \\
HRank ResNet-56 3\textsuperscript{49} \cite{lin2020hrank} & {\SI{90.72}{\percent}} &  & {\SI{270.00}{\kilo\nothing}} & \SI{32.52}{\nothing} \\
SFP ResNet-32 20\%\textsuperscript{50} \cite{he2018soft} & {\SI{90.63}{\percent}} &  & {} & \SI{49.00}{\nothing} \\
SFP ResNet-32 30\%\textsuperscript{51} \cite{he2018soft} & {\SI{90.08}{\percent}} &  & {} & \SI{40.30}{\nothing} \\
\end{tabular}
}
\end{table}

\begin{table*}[t]
    \footnotesize
    \centering
    \caption{Comparison to state-of-the-art techniques on the dataset ImageNet (ILSVRC-2012). ``\glsentryshort{scef} (linear decay)'' and ``\glsentryshort{scef} (log decay)'' refer to training a network using Algorithm 1 with linear and log decay, respectively, and ``\glsentryshort{scef}C'' refers to the network compression using Algorithm \ref{alg:usecase2}. *Official Tensorflow\cite{tensorflow2015-whitepaper} implementation.}
    \label{tab:comparision_imagenet}
    \resizebox{0.55\columnwidth}{!}{ \begin{tabular}{lrlrlrr}\toprule
Network & Top-5 Acc. & Std. & Top-1 Acc. & Std. & No. param. & GFLOPs\\
\midrule\multicolumn{7}{l}{{\bf (a) \Glsentryshort{scef} vs baseline network}} \\\hline
HRNet-W18-C\textsuperscript{0} \cite{wang2020deep} & {\SI{93.50}{\percent}} &  & {\SI{77.00}{\percent}} &  & {\SI{21.30}{\mega\nothing}} & \SI{3.99}{\nothing} \\
DeCEF-ResNet-50 (lin decay)\textsuperscript{1}  & {\SI{93.22}{\percent}} & (\SI{0.07}{\percent}) & {\SI{76.61}{\percent}} & (\SI{0.06}{\percent}) & {\SI{17.27}{\mega\nothing}} & \SI{2.90}{\nothing} \\
ResNet-50\textsuperscript{2} \cite{He2016b} & {\SI{93.21}{\percent}} &  & {\SI{76.47}{\percent}} &  & {\SI{25.56}{\mega\nothing}} & \SI{3.80}{\nothing} \\
DeCEF-ResNet-50 (log decay)\textsuperscript{3}  & {\SI{93.24}{\percent}} & (\SI{0.05}{\percent}) & {\SI{76.46}{\percent}} & (\SI{0.05}{\percent}) & {\SI{16.64}{\mega\nothing}} & \SI{2.50}{\nothing} \\
DeCEF-HRNet-W18-C (lin decay)\textsuperscript{4}  & {\SI{92.99}{\percent}} &  & {\SI{76.17}{\percent}} &  & {\SI{9.49}{\mega\nothing}} & \SI{2.55}{\nothing} \\
DeCEF-HRNet-W18-C (log decay)\textsuperscript{5}  & {\SI{92.47}{\percent}} &  & {\SI{75.11}{\percent}} &  & {\SI{7.05}{\mega\nothing}} & \SI{1.27}{\nothing} \\
\midrule\multicolumn{7}{l}{{\bf (b) Related work}} \\\hline
EfficientNet-B7\textsuperscript{6} \cite{tan2019efficientnet} & {\SI{96.84}{\percent}} &  & {\SI{84.43}{\percent}} &  & {\SI{64.10}{\mega\nothing}} &  \\
EfficientNet-B6\textsuperscript{7} \cite{tan2019efficientnet} & {\SI{96.90}{\percent}} &  & {\SI{84.08}{\percent}} &  & {\SI{41.00}{\mega\nothing}} &  \\
EfficientNet-B5\textsuperscript{8} \cite{tan2019efficientnet} & {\SI{96.71}{\percent}} &  & {\SI{83.70}{\percent}} &  & {\SI{28.50}{\mega\nothing}} &  \\
EfficientNet-B4\textsuperscript{9} \cite{tan2019efficientnet} & {\SI{96.26}{\percent}} &  & {\SI{82.96}{\percent}} &  & {\SI{17.70}{\mega\nothing}} &  \\
NASNetLarge\textsuperscript{10} \cite{zoph2018learning} & {\SI{96.00}{\percent}} &  & {\SI{82.50}{\percent}} &  & {\SI{84.90}{\mega\nothing}} &  \\
EfficientNet-B3\textsuperscript{11} \cite{tan2019efficientnet} & {\SI{95.68}{\percent}} &  & {\SI{81.58}{\percent}} &  & {\SI{10.80}{\mega\nothing}} &  \\
InceptionResNetV2\textsuperscript{12} \cite{Szegedy2017} & {\SI{95.25}{\percent}} &  & {\SI{80.26}{\percent}} &  & {\SI{54.30}{\mega\nothing}} &  \\
EfficientNet-B2\textsuperscript{13} \cite{tan2019efficientnet} & {\SI{94.95}{\percent}} &  & {\SI{80.18}{\percent}} &  & {\SI{7.80}{\mega\nothing}} &  \\
EfficientNet-B1\textsuperscript{14} \cite{tan2019efficientnet} & {\SI{94.45}{\percent}} &  & {\SI{79.13}{\percent}} &  & {\SI{6.60}{\mega\nothing}} &  \\
Xception\textsuperscript{15} \cite{Chollet2017} & {\SI{94.50}{\percent}} &  & {\SI{79.00}{\percent}} &  & {\SI{22.86}{\mega\nothing}} &  \\
ResNet152V2\textsuperscript{16} \cite{He2016b} & {\SI{94.16}{\percent}} &  & {\SI{78.03}{\percent}} &  & {\SI{58.30}{\mega\nothing}} &  \\
InceptionV3\textsuperscript{17} \cite{szegedy2016rethinking} & {\SI{93.72}{\percent}} &  & {\SI{77.90}{\percent}} &  & {\SI{21.80}{\mega\nothing}} &  \\
ShaResNet-152\textsuperscript{18} \cite{Boulch2018} & {\SI{93.86}{\percent}} &  & {\SI{77.77}{\percent}} &  & {\SI{36.80}{\mega\nothing}} &  \\
DenseNet201\textsuperscript{19} \cite{huang2016densely} & {\SI{93.62}{\percent}} &  & {\SI{77.32}{\percent}} &  & {\SI{18.30}{\mega\nothing}} &  \\
ResNet101V2\textsuperscript{20} \cite{He2016b} & {\SI{93.82}{\percent}} &  & {\SI{77.23}{\percent}} &  & {\SI{42.60}{\mega\nothing}} &  \\
EfficientNet-B0\textsuperscript{21} \cite{tan2019efficientnet} & {\SI{93.49}{\percent}} &  & {\SI{77.19}{\percent}} &  & {\SI{4.00}{\mega\nothing}} &  \\
ShaResNet-101\textsuperscript{22} \cite{Boulch2018} & {\SI{93.45}{\percent}} &  & {\SI{77.09}{\percent}} &  & {\SI{29.40}{\mega\nothing}} &  \\
GFP ResNet-50 1\textsuperscript{23} \cite{liu21ab} & {} &  & {\SI{76.95}{\percent}} &  & {} & \SI{3.06}{\nothing} \\
ResNet152\textsuperscript{24} \cite{He2016} & {\SI{93.12}{\percent}} &  & {\SI{76.60}{\percent}} &  & {\SI{58.40}{\mega\nothing}} &  \\
Taylor-FO-BN-91\%\textsuperscript{25} \cite{Molchanov_2019_CVPR} & {} &  & {\SI{76.43}{\percent}} &  & {\SI{22.60}{\mega\nothing}} & \SI{3.27}{\nothing} \\
ResNet101\textsuperscript{26} \cite{He2016} & {\SI{92.79}{\percent}} &  & {\SI{76.42}{\percent}} &  & {\SI{42.70}{\mega\nothing}} &  \\
GFP ResNet-50 2\textsuperscript{27} \cite{liu21ab} & {} &  & {\SI{76.42}{\percent}} &  & {} & \SI{2.04}{\nothing} \\
MetaPruning 0.85 ResNet-50\textsuperscript{28} \cite{liu2019metapruning} & {} &  & {\SI{76.20}{\percent}} &  & {} & \SI{3.00}{\nothing} \\
GBN-60\textsuperscript{29} \cite{you2019gate} & {\SI{92.83}{\percent}} &  & {\SI{76.19}{\percent}} &  & {\SI{17.42}{\mega\nothing}} & \SI{2.25}{\nothing} \\
DenseNet169\textsuperscript{30} \cite{huang2016densely} & {\SI{93.18}{\percent}} &  & {\SI{76.18}{\percent}} &  & {\SI{12.60}{\mega\nothing}} &  \\
ResNet-50 GAL-0.5-joint\textsuperscript{31} \cite{lin2019towards} & {\SI{90.82}{\percent}} &  & {\SI{76.15}{\percent}} &  & {\SI{19.31}{\mega\nothing}} & \SI{1.84}{\nothing} \\
ResRep ResNet-50 1\textsuperscript{32} \cite{ding2020lossless} & {\SI{92.90}{\percent}} &  & {\SI{76.15}{\percent}} &  & {} & \SI{1.67}{\nothing} \\
ResNet50V2\textsuperscript{33} \cite{He2016b} & {\SI{93.03}{\percent}} &  & {\SI{75.96}{\percent}} &  & {\SI{23.60}{\mega\nothing}} &  \\
SSS-ResNetXt-41\textsuperscript{34} \cite{huang2018data} & {\SI{93.00}{\percent}} &  & {\SI{75.93}{\percent}} &  & {\SI{12.40}{\mega\nothing}} & \SI{3.23}{\nothing} \\
SASL\textsuperscript{35} \cite{shi2021} & {\SI{92.82}{\percent}} &  & {\SI{75.76}{\percent}} &  & {} & \SI{1.91}{\nothing} \\
AOFP-C1\textsuperscript{36} \cite{ding19a} & {\SI{92.69}{\percent}} &  & {\SI{75.63}{\percent}} &  & {} & \SI{2.58}{\nothing} \\
ResRep ResNet-50 2\textsuperscript{37} \cite{ding2020lossless} & {\SI{92.55}{\percent}} &  & {\SI{75.49}{\percent}} &  & {} & \SI{1.44}{\nothing} \\
Taylor-FO-BN-81\%\textsuperscript{38} \cite{Molchanov_2019_CVPR} & {} &  & {\SI{75.48}{\percent}} &  & {\SI{17.90}{\mega\nothing}} & \SI{2.66}{\nothing} \\
SSS-ResNet-41\textsuperscript{39} \cite{huang2018data} & {\SI{92.61}{\percent}} &  & {\SI{75.44}{\percent}} &  & {\SI{25.30}{\mega\nothing}} & \SI{3.47}{\nothing} \\
MetaPruning 0.75 ResNet-50\textsuperscript{40} \cite{liu2019metapruning} & {} &  & {\SI{75.40}{\percent}} &  & {} & \SI{2.00}{\nothing} \\
ShaResNet-50\textsuperscript{41} \cite{Boulch2018} & {\SI{92.59}{\percent}} &  & {\SI{75.39}{\percent}} &  & {\SI{20.50}{\mega\nothing}} &  \\
MobileNetV2(alpha=1.4)\textsuperscript{42} \cite{Sandler2018} & {\SI{92.42}{\percent}} &  & {\SI{75.23}{\percent}} &  & {\SI{4.40}{\mega\nothing}} &  \\
ResNet-50 Variational\textsuperscript{43} \cite{zhao2019variational} & {\SI{92.10}{\percent}} &  & {\SI{75.20}{\percent}} &  & {\SI{15.30}{\mega\nothing}} &  \\
GBN-50\textsuperscript{44} \cite{you2019gate} & {\SI{92.41}{\percent}} &  & {\SI{75.18}{\percent}} &  & {\SI{11.91}{\mega\nothing}} & \SI{1.71}{\nothing} \\
SASL*\textsuperscript{45} \cite{shi2021} & {\SI{92.47}{\percent}} &  & {\SI{75.15}{\percent}} &  & {} & \SI{1.67}{\nothing} \\
AOFP-C2\textsuperscript{46} \cite{ding19a} & {\SI{92.28}{\percent}} &  & {\SI{75.11}{\percent}} &  & {} & \SI{1.66}{\nothing} \\
ResNet-50 FPGM-only 30\%\textsuperscript{47} \cite{he2019filter} & {\SI{92.40}{\percent}} &  & {\SI{75.03}{\percent}} &  & {} & \SI{2.23}{\nothing} \\
ResNet-50 HRank 1\textsuperscript{48} \cite{lin2020hrank} & {\SI{92.33}{\percent}} &  & {\SI{74.98}{\percent}} &  & {\SI{16.15}{\mega\nothing}} & \SI{2.30}{\nothing} \\
SSS-ResNetXt-38\textsuperscript{49} \cite{huang2018data} & {\SI{92.50}{\percent}} &  & {\SI{74.98}{\percent}} &  & {\SI{10.70}{\mega\nothing}} & \SI{2.43}{\nothing} \\
DenseNet121\textsuperscript{50} \cite{huang2016densely} & {\SI{92.26}{\percent}} &  & {\SI{74.97}{\percent}} &  & {\SI{7.00}{\mega\nothing}} &  \\
DCP\textsuperscript{51} \cite{zhuang2018discrimination} & {\SI{92.32}{\percent}} &  & {\SI{74.95}{\percent}} &  & {\SI{12.41}{\mega\nothing}} & \SI{1.69}{\nothing} \\
DCP\textsuperscript{52} \cite{zhuang2018discrimination} & {\SI{92.32}{\percent}} &  & {\SI{74.95}{\percent}} &  & {\SI{12.41}{\mega\nothing}} & \SI{1.69}{\nothing} \\
ResNet50\textsuperscript{53} \cite{He2016} & {\SI{92.06}{\percent}} &  & {\SI{74.93}{\percent}} &  & {\SI{23.60}{\mega\nothing}} &  \\
MobilenetV2\textsuperscript{54} \cite{Sandler2018} & {} &  & {\SI{74.70}{\percent}} &  & {\SI{6.90}{\mega\nothing}} & \SI{0.58}{\nothing} \\
MobileNetV2(alpha=1.3)\textsuperscript{55} \cite{Sandler2018} & {\SI{92.12}{\percent}} &  & {\SI{74.68}{\percent}} &  & {\SI{3.80}{\mega\nothing}} &  \\
SFP\textsuperscript{56} \cite{he2018soft} & {\SI{92.06}{\percent}} &  & {\SI{74.61}{\percent}} &  & {} & \SI{2.19}{\nothing} \\
SSS-ResNetXt-35-A\textsuperscript{57} \cite{huang2018data} & {\SI{92.17}{\percent}} &  & {\SI{74.57}{\percent}} &  & {\SI{10.00}{\mega\nothing}} & \SI{2.07}{\nothing} \\
C-SGD-50\textsuperscript{58} \cite{ding2019centripetal} & {\SI{92.09}{\percent}} &  & {\SI{74.54}{\percent}} &  & {} & \SI{1.71}{\nothing} \\
Taylor-FO-BN-72\%\textsuperscript{59} \cite{Molchanov_2019_CVPR} & {} &  & {\SI{74.50}{\percent}} &  & {\SI{14.20}{\mega\nothing}} & \SI{2.25}{\nothing} \\
LFPC\textsuperscript{60} \cite{He_2020_CVPR} & {\SI{92.04}{\percent}} &  & {\SI{74.46}{\percent}} &  & {} & \SI{1.60}{\nothing} \\
NASNetMobile\textsuperscript{61} \cite{zoph2018learning} & {\SI{91.85}{\percent}} &  & {\SI{74.37}{\percent}} &  & {\SI{4.30}{\mega\nothing}} &  \\
SSS-ResNet-32\textsuperscript{62} \cite{huang2018data} & {\SI{91.91}{\percent}} &  & {\SI{74.18}{\percent}} &  & {\SI{18.60}{\mega\nothing}} & \SI{2.82}{\nothing} \\
GFP ResNet-50 3\textsuperscript{63} \cite{liu21ab} & {} &  & {\SI{73.94}{\percent}} &  & {} & \SI{1.02}{\nothing} \\
Pruned-90\textsuperscript{64} \cite{Luo2017b} & {\SI{91.60}{\percent}} &  & {\SI{73.56}{\percent}} &  & {\SI{23.89}{\mega\nothing}} & \SI{3.58}{\nothing} \\
MetaPruning 0.5 ResNet-50\textsuperscript{65} \cite{liu2019metapruning} & {} &  & {\SI{73.40}{\percent}} &  & {} & \SI{1.00}{\nothing} \\
ShaResNet-34\textsuperscript{66} \cite{Boulch2018} & {\SI{90.58}{\percent}} &  & {\SI{73.27}{\percent}} &  & {\SI{13.60}{\mega\nothing}} &  \\
SSS-ResNetXt-35-B\textsuperscript{67} \cite{huang2018data} & {\SI{91.58}{\percent}} &  & {\SI{73.17}{\percent}} &  & {\SI{8.50}{\mega\nothing}} & \SI{1.55}{\nothing} \\
Pruned-75\textsuperscript{68} \cite{Luo2017b} & {\SI{91.27}{\percent}} &  & {\SI{72.89}{\percent}} &  & {\SI{21.47}{\mega\nothing}} & \SI{3.19}{\nothing} \\
NISP-50-A\textsuperscript{69} \cite{Yu2018} & {} &  & {\SI{72.75}{\percent}} &  & {\SI{18.63}{\mega\nothing}} & \SI{2.76}{\nothing} \\
GDP 0.7\textsuperscript{70} \cite{lin2018accelerating} & {\SI{91.05}{\percent}} &  & {\SI{72.61}{\percent}} &  & {} & \SI{2.24}{\nothing} \\
ResNet-34-pruned-A\textsuperscript{71} \cite{Li2016} & {} &  & {\SI{72.56}{\percent}} &  & {\SI{19.90}{\mega\nothing}} & \SI{3.08}{\nothing} \\
ResNet-34-pruned-C\textsuperscript{72} \cite{Li2016} & {} &  & {\SI{72.48}{\percent}} &  & {\SI{20.10}{\mega\nothing}} & \SI{3.37}{\nothing} \\
NISP-34-A\textsuperscript{73} \cite{Yu2018} & {} &  & {\SI{72.29}{\percent}} &  & {\SI{15.74}{\mega\nothing}} & \SI{2.62}{\nothing} \\
ResNet-50 SSR-L2,0 A\textsuperscript{74} \cite{lin2019toward} & {\SI{91.73}{\percent}} &  & {\SI{72.29}{\percent}} &  & {\SI{15.50}{\mega\nothing}} & \SI{1.90}{\nothing} \\
ResNet-34-pruned-B\textsuperscript{75} \cite{Li2016} & {} &  & {\SI{72.17}{\percent}} &  & {\SI{19.30}{\mega\nothing}} & \SI{2.76}{\nothing} \\
ResNet-50 SSR-L2,1 A\textsuperscript{76} \cite{lin2019toward} & {\SI{91.57}{\percent}} &  & {\SI{72.13}{\percent}} &  & {\SI{15.90}{\mega\nothing}} & \SI{1.90}{\nothing} \\
NISP-50-B\textsuperscript{77} \cite{Yu2018} & {} &  & {\SI{72.07}{\percent}} &  & {\SI{14.36}{\mega\nothing}} & \SI{2.13}{\nothing} \\
ThiNet-70\textsuperscript{78} \cite{Luo2017} & {\SI{90.67}{\percent}} &  & {\SI{72.04}{\percent}} &  & {\SI{16.94}{\mega\nothing}} & \SI{4.88}{\nothing} \\
ResNet-50 HRank 2\textsuperscript{79} \cite{lin2020hrank} & {\SI{91.01}{\percent}} &  & {\SI{71.98}{\percent}} &  & {\SI{13.77}{\mega\nothing}} & \SI{1.55}{\nothing} \\
GDP 0.6\textsuperscript{80} \cite{lin2018accelerating} & {\SI{90.71}{\percent}} &  & {\SI{71.89}{\percent}} &  & {} & \SI{1.88}{\nothing} \\
SSS-ResNet-26\textsuperscript{81} \cite{huang2018data} & {\SI{90.79}{\percent}} &  & {\SI{71.82}{\percent}} &  & {\SI{15.60}{\mega\nothing}} & \SI{2.33}{\nothing} \\
Taylor-FO-BN-56\%\textsuperscript{82} \cite{Molchanov_2019_CVPR} & {} &  & {\SI{71.69}{\percent}} &  & {\SI{7.90}{\mega\nothing}} & \SI{1.34}{\nothing} \\
NISP-34-B\textsuperscript{83} \cite{Yu2018} & {} &  & {\SI{71.65}{\percent}} &  & {\SI{12.17}{\mega\nothing}} & \SI{2.02}{\nothing} \\
ResNet-50 SSR-L2,0 B\textsuperscript{84} \cite{lin2019toward} & {\SI{91.19}{\percent}} &  & {\SI{71.47}{\percent}} &  & {\SI{12.00}{\mega\nothing}} & \SI{1.70}{\nothing} \\
MobileNetV2(alpha=1.0)\textsuperscript{85} \cite{Sandler2018} & {\SI{90.14}{\percent}} &  & {\SI{71.34}{\percent}} &  & {\SI{2.30}{\mega\nothing}} &  \\
ResNet-50 SSR-L2,1 B\textsuperscript{86} \cite{lin2019toward} & {\SI{91.29}{\percent}} &  & {\SI{71.15}{\percent}} &  & {\SI{12.20}{\mega\nothing}} & \SI{1.70}{\nothing} \\
ThiNet-50\textsuperscript{87} \cite{Luo2017} & {\SI{90.02}{\percent}} &  & {\SI{71.01}{\percent}} &  & {\SI{12.38}{\mega\nothing}} & \SI{3.41}{\nothing} \\
GDP 0.5\textsuperscript{88} \cite{lin2018accelerating} & {\SI{90.14}{\percent}} &  & {\SI{70.93}{\percent}} &  & {} & \SI{1.57}{\nothing} \\
Pruned-50\textsuperscript{89} \cite{Luo2017b} & {\SI{90.03}{\percent}} &  & {\SI{70.84}{\percent}} &  & {\SI{17.38}{\mega\nothing}} & \SI{2.52}{\nothing} \\
MobileNet(alpha=1.0)\textsuperscript{90} \cite{howard2017mobilenets} & {\SI{89.50}{\percent}} &  & {\SI{70.42}{\percent}} &  & {\SI{3.20}{\mega\nothing}} &  \\
MobileNetV2(alpha=0.75)\textsuperscript{91} \cite{Sandler2018} & {\SI{89.18}{\percent}} &  & {\SI{69.53}{\percent}} &  & {\SI{1.40}{\mega\nothing}} &  \\
ResNet-50 GAL-1-joint\textsuperscript{92} \cite{lin2019towards} & {\SI{89.12}{\percent}} &  & {\SI{69.31}{\percent}} &  & {\SI{10.21}{\mega\nothing}} & \SI{1.11}{\nothing} \\
ResNet-50 HRank 3\textsuperscript{93} \cite{lin2020hrank} & {\SI{89.58}{\percent}} &  & {\SI{69.10}{\percent}} &  & {\SI{8.27}{\mega\nothing}} & \SI{0.98}{\nothing} \\
GreBdec (VGG-16)\textsuperscript{94} \cite{Yu2017} & {\SI{89.06}{\percent}} &  & {\SI{68.75}{\percent}} &  & {\SI{9.70}{\mega\nothing}} &  \\
ThiNet-30\textsuperscript{95} \cite{Luo2017} & {\SI{88.30}{\percent}} &  & {\SI{68.42}{\percent}} &  & {\SI{8.66}{\mega\nothing}} & \SI{2.20}{\nothing} \\
MobileNet(alpha=0.75)\textsuperscript{96} \cite{howard2017mobilenets} & {\SI{88.24}{\percent}} &  & {\SI{68.41}{\percent}} &  & {\SI{1.80}{\mega\nothing}} &  \\
GreBdec (GoogLeNet)\textsuperscript{97} \cite{Yu2017} & {\SI{88.11}{\percent}} &  & {\SI{68.30}{\percent}} &  & {\SI{1.50}{\mega\nothing}} &  \\
\end{tabular}
}
\end{table*}
%\end{landscape}
\restoregeometry

\FloatBarrier

%%% Local Variables:
%%% mode: latex
%%% TeX-master: "supplementary"
%%% End:

\bibliographystyle{plainnat}
\bibliography{library}